\providecommand{\institute}[1]{
  \apptocmd{\@author}{\end{tabular}
    \par
    \begin{tabular}[t]{c}
    #1}{}{}
}
\renewcommand{\P}{\mathbb{P}}
\newcommand{\D}{\mathcal{D}}
\newcommand{\X}{\mathcal{X}}
\newcommand{\T}{\mathcal{T}}
\newcommand{\R}{\mathbb{R}}
\newcommand{\Z}{\mathbb{Z}}
\newcommand{\E}{\mathbb{E}}
\newcommand{\1}{\mathbf{1}}
\newcommand{\pa}{\textnormal{Pa}}
\newcommand{\indep}{{\perp\!\!\!\perp}}
\newcommand{\argmax}{\text{arg}\max}
\newcommand{\hiddencause}{H}
\newcommand{\demand}{D}
\newcommand{\observable}{O}
\newcommand{\anomalyI}{A}
\newcommand{\anomalyII}{S}
\newcommand{\measurement}{M}
\newtheorem{theorem}{Theorem}
\newtheorem{lemma}{Lemma}
\theoremstyle{definition}
\newtheorem{definition}{Definition}
\theoremstyle{remark}
\pgfplotsset{
    /pgfplots/table/omit header/.style={%
        /pgfplots/table/typeset cell/.append code={%
            \ifnum\c@pgfplotstable@rowindex=-1
                \pgfkeyslet{/pgfplots/table/@cell content}\pgfutil@empty%
            \fi
        }
    }
}
\newcolumntype{N}{S[table-format=2.2,round-mode=places,round-precision=2]}
\newcolumntype{M}{S[table-format=2.0,round-mode=places,round-precision=0]}
\newcolumntype{O}{@{\,}c@{\,}S[table-format=2.2,round-mode=places,round-precision=2]@{}c@{\ }c}
\newcolumntype{P}{S[table-format=1.4,round-mode=places,round-precision=4]}
\newcommand{\results}[2][]{
\setlength{\tabcolsep}{0.5em}
    \pgfplotstabletypeset[
    col sep=comma,
    string type,
    columns/method/.style={column type={l|},column name=\multicolumn{1}{l|}{method}},
    columns/l_dist/.style={column type=O,column name=},
    columns/l_dist_med/.style={column type=M,column name=\multicolumn{4}{c}{distance ($-$)}},
    columns/l_dist_mean/.style={column type=N,column name={}},
    columns/l_closer/.style={column type=O,column name={}},
    columns/l_closer_med/.style={column type=M,column name=\multicolumn{4}{c}{$\#$closer ($-$)}},
    columns/l_closer_mean/.style={column type=N,column name={}},
    columns/l_reldist/.style={column type=O,column name={}},
    columns/l_reldist_med/.style={column type=M,column name=\multicolumn{4}{c}{rel. dist ($-$)}},
    columns/l_reldist_mean/.style={column type=N,column name={}},
    columns/l_best3/.style={column type=O|,column name={}},
    columns/l_best3_med/.style={column type=M,column name=\multicolumn{4}{c}{best 3 ($+$)}},
    columns/l_best3_mean/.style={column type=N,column name={}},
    columns/s_dist/.style={column type=O,column name={}},
    columns/s_dist_med/.style={column type=M,column name=\multicolumn{4}{c}{distance ($-$)}},
    columns/s_dist_mean/.style={column type=N,column name={}},
    columns/s_closer/.style={column type=O,column name={}},
    columns/s_closer_med/.style={column type=M,column name=\multicolumn{4}{c}{closer ($-$)}},
    columns/s_closer_mean/.style={column type=N,column name={}},
    columns/recall/.style={column type=P,column name={\multicolumn{1}{c}{recall}}},
    columns/precision/.style={column type=P,column name=\multicolumn{1}{c}{precision}},
    columns/f1/.style={column type=P,column name={\multicolumn{1}{c}{F1}}},
    columns={method,l_dist_med,l_dist_mean,l_dist,l_closer_med,l_closer_mean,l_closer,l_reldist_med,l_reldist_mean,l_reldist,l_best3_med,l_best3_mean,l_best3,recall,precision,f1},
    every head row/.style={before row={\multicolumn{1}{c|}{}&\multicolumn{24}{c|}{type I: leakages}&\multicolumn{3}{c}{type II: sensor faults}\\},after row={\multicolumn{1}{c|}{}&m&$\mu$&$\pm$&$\sigma$&&&m&$\mu$&$\pm$&$\sigma$&&&m&$\mu$&$\pm$&$\sigma$&&&m&$\mu$&$\pm$&$\sigma$&&&\\\midrule}},
    every row no 2/.append style={after row=\midrule},
    #1,
    create on use/l_dist/.style={
      create col/assign/.code={%
        \ifnum\pdfstrcmp{\thisrow{l_dist_mean}}{}=0
        \edef\entry{--&&}
        \else
        \edef\entry{$\pm$&\thisrow{l_dist_std}&&}
        \fi
        \pgfkeyslet{/pgfplots/table/create col/next content}\entry
      }
    },
    create on use/l_closer/.style={
      create col/assign/.code={%
        \ifnum\pdfstrcmp{\thisrow{l_closer_mean}}{}=0
        \edef\entry{--&&}
        \else
        \edef\entry{$\pm$&\thisrow{l_closer_std}&&}
        \fi
        \pgfkeyslet{/pgfplots/table/create col/next content}\entry
      }
    },
    create on use/l_reldist/.style={
      create col/assign/.code={%
        \ifnum\pdfstrcmp{\thisrow{l_reldist_mean}}{}=0
        \edef\entry{--&&}
        \else
        \edef\entry{$\pm$&\thisrow{l_reldist_std}&&}
        \fi
        \pgfkeyslet{/pgfplots/table/create col/next content}\entry
      }
    },
    create on use/l_best3/.style={
      create col/assign/.code={%
        \ifnum\pdfstrcmp{\thisrow{l_best3_mean}}{}=0
        \edef\entry{--&&}
        \else
        \edef\entry{$\pm$&\thisrow{l_best3_std}&&}
        \fi
        \pgfkeyslet{/pgfplots/table/create col/next content}\entry
      }
    },
    create on use/s_dist/.style={
      create col/assign/.code={%
        \ifnum\pdfstrcmp{\thisrow{s_dist_mean}}{}=0
        \edef\entry{--&&}
        \else
        \edef\entry{$\pm$&\thisrow{s_dist_std}&&}
        \fi
        \pgfkeyslet{/pgfplots/table/create col/next content}\entry
      }
    },
    create on use/s_closer/.style={
      create col/assign/.code={%
        \ifnum\pdfstrcmp{\thisrow{s_closer_mean}}{}=0
        \edef\entry{--&&}
        \else
        \edef\entry{$\pm$&\thisrow{s_closer_std}&&}
        \fi
        \pgfkeyslet{/pgfplots/table/create col/next content}\entry
      }
    },
    create on use/s_reldist/.style={
      create col/assign/.code={\edef\entry{$\pm$&\thisrow{s_reldist_std}&&}%
        \pgfkeyslet{/pgfplots/table/create col/next content}\entry
      }
    }
    ]{#2}
}
\newcommand{\resultsElec}[2][]{
\setlength{\tabcolsep}{0.5em}
    \pgfplotstabletypeset[
    col sep=comma,
    string type,
    columns/method/.style={column type={l|},column name=\multicolumn{1}{l|}{method}},
    columns/s_dist/.style={column type=O,column name={}},
    columns/s_dist_med/.style={column type=M,column name=\multicolumn{4}{c}{distance ($-$)}},
    columns/s_dist_mean/.style={column type=N,column name={}},
    columns/s_closer/.style={column type=O,column name={}},
    columns/s_closer_med/.style={column type=M,column name=\multicolumn{4}{c}{closer ($-$)}},
    columns/s_closer_mean/.style={column type=N,column name={}},
    columns={method,s_dist_med,s_dist_mean,s_dist,s_closer_med,s_closer_mean,s_closer},
    every head row/.style={before row={\multicolumn{1}{c|}{}&\multicolumn{12}{c}{type II: sensor faults}\\},after row={\multicolumn{1}{c|}{}&m&$\mu$&$\pm$&$\sigma$&&&m&$\mu$&$\pm$&$\sigma$\\\midrule}},
    every row no 2/.append style={after row=\midrule},
    #1,
    create on use/s_dist/.style={
      create col/assign/.code={%
        \ifnum\pdfstrcmp{\thisrow{s_dist_mean}}{}=0
        \edef\entry{--&&}
        \else
        \edef\entry{$\pm$&\thisrow{s_dist_std}&&}
        \fi
        \pgfkeyslet{/pgfplots/table/create col/next content}\entry
      }
    },
    create on use/s_closer/.style={
      create col/assign/.code={%
        \ifnum\pdfstrcmp{\thisrow{s_closer_mean}}{}=0
        \edef\entry{--&&}
        \else
        \edef\entry{$\pm$&\thisrow{s_closer_std}&&}
        \fi
        \pgfkeyslet{/pgfplots/table/create col/next content}\entry
      }
    },
    create on use/s_reldist/.style={
      create col/assign/.code={\edef\entry{$\pm$&\thisrow{s_reldist_std}&&}%
        \pgfkeyslet{/pgfplots/table/create col/next content}\entry
      }
    }
    ]{#2}
}
\newcommand{\resultsElecAcc}[2][]{
\setlength{\tabcolsep}{0.5em}
    \pgfplotstabletypeset[
    col sep=comma,
    string type,
    columns/method/.style={column type={l|},column name=\multicolumn{1}{l|}{method}},
    columns/recall/.style={column type=P,column name={\multicolumn{1}{c}{recall}}},
    columns/precision/.style={column type=P,column name=\multicolumn{1}{c}{precision}},
    columns/f1/.style={column type=P,column name={\multicolumn{1}{c}{f1}}},
    columns={method,recall,precision,f1},
    every head row/.style={after row={\midrule}},
    every row no 2/.append style={after row=\midrule},
    #1,
    create on use/s_dist/.style={
      create col/assign/.code={%
        \ifnum\pdfstrcmp{\thisrow{s_dist_mean}}{}=0
        \edef\entry{--&&}
        \else
        \edef\entry{$\pm$&\thisrow{s_dist_std}&&}
        \fi
        \pgfkeyslet{/pgfplots/table/create col/next content}\entry
      }
    },
    create on use/s_closer/.style={
      create col/assign/.code={%
        \ifnum\pdfstrcmp{\thisrow{s_closer_mean}}{}=0
        \edef\entry{--&&}
        \else
        \edef\entry{$\pm$&\thisrow{s_closer_std}&&}
        \fi
        \pgfkeyslet{/pgfplots/table/create col/next content}\entry
      }
    },
    create on use/s_reldist/.style={
      create col/assign/.code={\edef\entry{$\pm$&\thisrow{s_reldist_std}&&}%
        \pgfkeyslet{/pgfplots/table/create col/next content}\entry
      }
    }
    ]{#2}
}
\pgfplotsset{compat=1.8}
\definecolor{tangored}{HTML}{CC0000}
\definecolor{tangoplum}{HTML}{75507B}
\definecolor{tangoblue}{HTML}{3465A4}
\definecolor{tangoorange}{HTML}{F57900}
\definecolor{tangogreen}{HTML}{73D216}
\definecolor{tangobutter}{HTML}{EDD400}
\definecolor{aluminium1}{HTML}{EEEEEC}
\definecolor{aluminium2}{HTML}{D3D7CF}
\definecolor{aluminium3}{HTML}{BABDB6}
\definecolor{aluminium4}{HTML}{888A85}
\definecolor{aluminium5}{HTML}{555753}
\definecolor{aluminium6}{HTML}{2E3436}
\colorlet{class0c}{aluminium4}
\colorlet{class1c}{tangoplum}
\colorlet{class2c}{tangogreen}
\colorlet{class3c}{tangoorange}
\colorlet{class4c}{tangoblue}
\tikzstyle{classinv}=[draw=none, mark size=0pt]
\tikzstyle{class0}=[fill=class0c, draw=aluminium6,mark size=1pt]
\tikzstyle{class1}=[fill=class1c, draw=class1c!80!black,mark size=1pt]
\tikzstyle{class2}=[fill=class2c, draw=class2c!80!black,mark size=1pt]
\def\xshift{3.5}
\def\yshift{8}
\pgfplotsset{ticks=none,
	yticklabels=\empty,
	xticklabels=\empty,
	width=0.15\textwidth,
	height=0.15\textwidth,
	label style={font=\labelsize},
}
\newcommand{\plotdataset}[4][1]{
	\pgfmathsetseed{42}
	\addplot [#2,only marks, samples=10] ({invgauss(rnd,rnd)},{invgauss(rnd,rnd)});
	\addplot [#2,only marks, samples=5] ({1.5*invgauss(rnd,rnd)+#4},{invgauss(rnd,rnd)+#1*\yshift});
	\addplot [#3, only marks, samples=10] ({invgauss(rnd,rnd)+2*#4},{invgauss(rnd,rnd)});
	\addplot [#3, only marks, samples=5] ({1.5*invgauss(rnd,rnd)+#4},{invgauss(rnd,rnd)+#1*\yshift});
}
\newcommand{\plotSegment}[1][]{
	\pattern[pattern=north east lines, pattern color=class1c] (axis cs:-5,-5)--(axis cs:-5,0.5*\yshift)--(axis cs:\xshift,0.5*\yshift)--(axis cs:\xshift,-5)--cycle;
	\pattern[pattern=north west lines, pattern color=class2c] (axis cs:\xshift,-5)--(axis cs:\xshift,0.5*\yshift)--(axis cs:4*\xshift,0.5*\yshift)--(axis cs:4*\xshift,-5)--cycle;
	\ifthenelse{\equal{#1}{}}{}{
		\pattern[pattern=vertical lines, pattern color=class0c] (axis cs:-5,0.5*\yshift)--(axis cs:-5,2*\yshift)--(axis cs:4*\xshift,2*\yshift)--(axis cs:4*\xshift,0.5*\yshift)--cycle;
	}
}
\newcommand{\labelsize}{\scriptsize}
\newcommand{\tikzLocalization}{
\begin{tikzpicture}[
	node distance=5em,
	from/.style={{Stealth[length=0.8em, round]}-,double,shorten >=-0.2em,shorten <=-0.2em},
	towards/.style={-{Stealth[length=0.8em, round]},double,shorten >=-0.2em,shorten <=-0.2em},
	]
		\node(beforedrift) {%
			\begin{tikzpicture}
			\begin{axis}[xlabel={before drift}]
				 \plotdataset{class1}{classinv}{\xshift}
			\end{axis}
			\end{tikzpicture}
		};
		\node(afterdrift) [node distance=0em, below=of beforedrift] {%
			\begin{tikzpicture}
			\begin{axis}[xlabel={after drift}]
				 \plotdataset{classinv}{class2}{\xshift}
			\end{axis}
			\end{tikzpicture}
		};
	\draw[decorate,decoration={brace,amplitude=1.5em,mirror,raise=-0.3em}] (beforedrift.north west) -- node[xshift=-1.2em](leftdrift){} (afterdrift.south west);
	\draw[decorate,decoration={brace,amplitude=1.5em,raise=-0.3em}] (beforedrift.north east) -- node[xshift=1.2em](rightdrift){} (afterdrift.south east);

	\node(stream) [left=of leftdrift] {%
		\begin{tikzpicture}
			\begin{axis}[xlabel={}]
				 \plotdataset{class0}{class0}{\xshift}
			\end{axis}
		\end{tikzpicture}
	};

   \draw (stream.east)edge[-{Latex[length=3mm]}]
		[postaction={decoration={text along path, text align=center,text={|\labelsize |drift ||},
			raise=0.2em,},decorate}]
		[postaction={decoration={text along path, text align=center,text={|\labelsize |detection ||},
		raise=-0.7em,}, decorate}]
 (leftdrift.west);
 	\node[node distance=0em,below=of stream] {\labelsize stream};
	\node(driftingarea) [right=of rightdrift] {
		\begin{tikzpicture}
		\begin{axis}
			\plotdataset{class1}{class2}{\xshift}
			\plotSegment
		\end{axis}
	\end{tikzpicture}
	} ;
 \draw (rightdrift.east)edge[-{Latex[length=3mm]}]
		[postaction={decoration={text along path, text align=center,text={|\labelsize |train ||},
			raise=0.2em,},decorate}]
		[postaction={decoration={text along path, text align=center,text={|\labelsize |model ||},
		raise=-0.7em,}, decorate}]
 (driftingarea.west);

 \node (pfi) [right=of driftingarea] {
							\begin{tikzpicture}
								\begin{axis}[
									boxplot/draw direction=x,
									ymin=0,ymax=5,
									cycle list={{class1c},{class2c}},
									tick align=inside,
									ytick={1,2,3,4},
									yticklabels={$z_2$,$z_1$,$y$,$x$},
									]
									\addplot+[fill,fill opacity=0.2,
									boxplot prepared={
										median=2,
										upper quartile=2.35,
										lower quartile=1.5,
										upper whisker=3.4,
										lower whisker=1.1
									},
									] coordinates {};
									\addplot+[fill,fill opacity=0.2,
									boxplot prepared={
										median=2.59,
										upper quartile=3.35,
										lower quartile=2,
										upper whisker=4.4,
										lower whisker=1.1
									},
									] coordinates {};
									\addplot+[fill,fill opacity=0.2,
									boxplot prepared={
										median=5.57,
										upper quartile=6.93,
										lower quartile=4.93,
										upper whisker=8.68,
										lower whisker=3.03
									},
									] coordinates {};
									\addplot+[fill,fill opacity=0.2,
									boxplot prepared={
										median=6.57,
										upper quartile=7.93,
										lower quartile=5.93,
										upper whisker=10.68,
										lower whisker=4.03
									},
									] coordinates {};
								\end{axis}
							\end{tikzpicture}
							
					};
        \node[node distance=0em,below=of pfi] {\labelsize feature importances};
        \node[node distance=0em,below=of driftingarea] {\labelsize drift localization};
     \draw (driftingarea.east)edge[-{Latex[length=3mm]}]
		[postaction={decoration={text along path, text align=center,text={|\labelsize |model ||},
			raise=0.2em,},decorate}]
		[postaction={decoration={text along path, text align=center,text={|\labelsize |explanation ||},
		raise=-0.7em,}, decorate}]
 (pfi.west);
	\end{tikzpicture}
 }
\def\BibTeX{{\rm B\kern-.05em{\sc i\kern-.025em b}\kern-.08em
    T\kern-.1667em\lower.7ex\hbox{E}\kern-.125emX}}
\begin{document}

\title{Localizing Anomalies in Critical Infrastructure using Model-Based Drift Explanations\thanks{We gratefully acknowledge funding from the European Research Council (ERC) under the ERC Synergy Grant Water-Futures (Grant agreement No. 951424) and funding in the frame of SAIL which is funded by the Ministry of Culture and Science of the State of North Rhine-Westphalia under the grant no NW21-059B.\\\textsuperscript{**}Authors contributed equally.}}

\author{Valerie Vaquet$^{1**}$, Fabian Hinder$^{1**}$, Jonas Vaquet$^1$, Kathrin Lammers$^1$,\\Lars Quakernack$^2$, and Barbara Hammer$^1$}
\institute{$^1$Machine Learning Group, Bielefeld University, Bielefeld, Germany\\
$^2$Institute for Technical Energy Systems, Hochschule Bielefeld –- \\University of Applied
Sciences and Arts
Bielefeld, Germany}
\date{February 2024}

\maketitle
\begin{abstract}
Facing climate change, the already limited availability of drinking water will decrease in the future rendering drinking water an increasingly scarce resource. Considerable amounts of it are lost through leakages in water transportation and distribution networks. Thus, anomaly detection and localization, in particular for leakages, are crucial but challenging tasks due to the complex interactions and changing demands in water distribution networks. 
In this work, we analyze the effects of anomalies on the dynamics of critical infrastructure systems by modeling the networks employing Bayesian networks. We then discuss how the problem is connected to and can be considered through the lens of concept drift. In particular, we argue that model-based explanations of concept drift are a promising tool for localizing anomalies given limited information about the network. The methodology is experimentally evaluated using realistic benchmark scenarios. To showcase that our methodology applies to critical infrastructure more generally, in addition to considering leakages and sensor faults in water systems, we showcase the suitability of the derived technique to localize sensor faults in power systems.
\end{abstract}

\paragraph{Keywords}
Water Distribution Networks, Leakage Localization, Anomaly Localization, Concept Drift, Explainable AI, Model-Based Drift Explanations.

\section{Introduction}\
Clean and safe drinking water is a scarce resource in many areas. Almost 80\% of the world's population is classified as having high levels of threat in water security~\cite{vorosmarty2010global}. This will aggravate in the future as due to climate change the already limited water resources will become more restricted~\cite{rodell2018emerging}.
Currently, across Europe, considerable amounts of drinking water are lost due to leakages in the system\footnote{\url{https://www.eureau.org/resources/publications/1460-eureau-data-report-2017-1/file}}. The issue has been identified by the European Union, which put the topic on the political agenda recently\footnote{\url{https://eur-lex.europa.eu/legal-content/EN/TXT/HTML/?uri=CELEX:32020L2184&from=ES#d1e40-1-1}}.

To ensure a reliable drinking water supply, there is a need for reliable, safe, and efficient water distribution networks (WDNs). In addition to avoiding water losses, a crucial requirement is to ensure the quality of the drinking water, e.g. to avoid the spreading and growth of bacteria and other contaminants. Leakages pose a major risk to water quality as they make it possible for unwanted substances to enter the water system~\cite{eliades_contamination_2023}. Thus, monitoring the system for leakages is an efficient tool to avoid both water loss and contamination~\cite{eliades_fault_2010}. Besides, monitoring for other anomalies like sensor faults is required to avoid errors in further control and supervision downstream tasks~\cite{faults-water}.

Due to complex network dynamics and changing demand patterns detecting and localizing anomalies are challenging tasks~\cite{vrachimis_battle_2022}. This is aggravated by the fact, that the available data is very limited~\cite{eggimann_potential_2017}. Usually, the precise network topology remains unknown or the documentation contains errors. As smart water meter technologies are not widely distributed there is no real-time demand information~\cite{smartmeters}. This leaves a set of scarce pressure and possibly flow measurements. When considering historical recordings the presence and exact timings of possible leakages are frequently unknown as especially smaller leakages might not have been detected. 


Many of the state-of-the-art schemes for leakage detection and localization are based on building a hydraulic model replicating a specific system. This requires a lot of problem-specific information like the precise network topology and real-time demands~\cite{vrachimis_battle_2022}. Since frequently this information is not available, this strongly limits the applicability and generalization of current solutions. Besides, whenever the real system changes the need to adapt the simulation arises.
Recently, considerable research on applying machine learning techniques to leakage detection has been conducted \cite{hu_survey_2018,wu_review_2017}. However, machine learning approaches tackling the task of leakage localization are still limited and frequently rely on network topologies and historic leakage-free data.

\begin{figure*}[t]
\centering
\begin{minipage}{0.45\textwidth}
    \centering
    \includegraphics[width=0.8\textwidth]{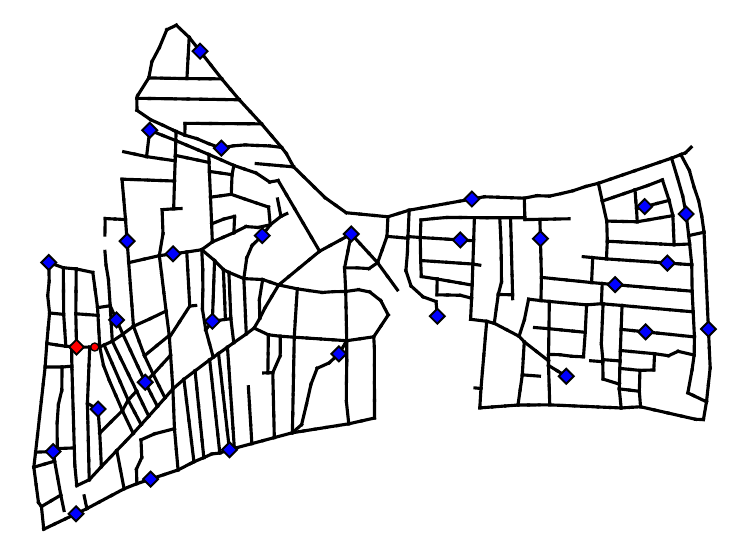}
    \subcaption{L-Town topology. Diamonds mark the position of pressure sensors, red dot marks the leakage position.\label{fig:ltwon:map}}
\end{minipage}
\begin{minipage}{0.45\textwidth}
    \centering
    \includegraphics[width=0.8\textwidth]{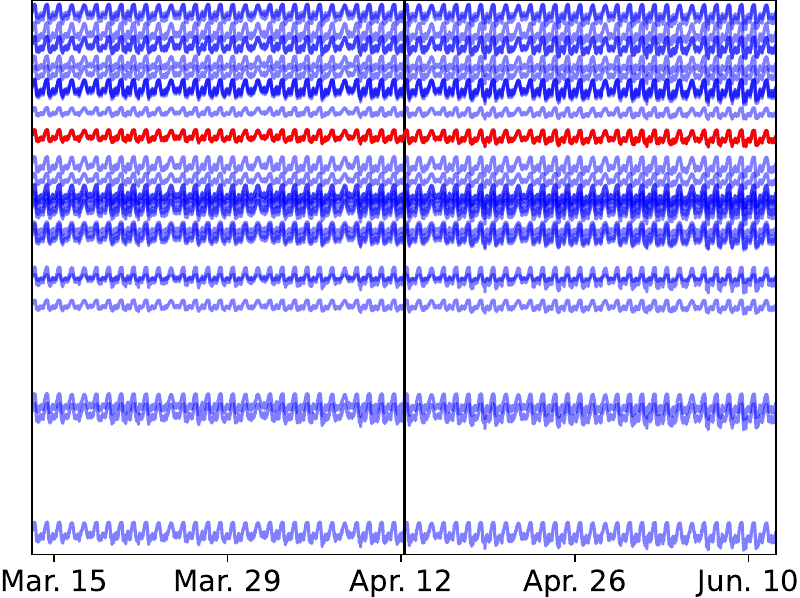}
    \subcaption{Pressures at sensors in Figure~\ref{fig:ltwon:map} (colors match), black line marks occurrence time of leakage.\label{fig:ltwon:pressure}}
\end{minipage}
\caption{L-Town~\cite{vrachimis_battle_2022} topology and pressure value example. Small leakages cannot be found by the human eye even if highlighted. }
\label{fig:ltwon}
\end{figure*}

In this work, we analyze anomalies by describing the critical infrastructure and two conceptionally different types of anomalies utilizing dynamic Bayesian Networks. Based on this modeling, we derive a definition for anomaly detection and localization. Our analysis justifies the choice of many methodologies used for leakage detection. Concerning the task of anomaly localization, we propose to apply a model-based drift explanation~\cite{neucomp}. This methodology has the advantage that, in contrast to most state-of-the-art solutions, it does not require precise topological information, real-time demands, or historical leakage-free data, and it can be directly applied to a new network without requiring retraining or modeling the new network. While our work mainly focuses on WDNs, it applies to different critical infrastructure systems which can be modeled as graphs. We exemplary showcase this by considering an electrical grid (EG) in our experiments.

This paper is structured as follows. First, a brief introduction to critical infrastructure is provided (Section~\ref{sec:wdns}). Afterward, in Section~\ref{sec:causal-model} we model anomalies in critical infrastructure utilizing dynamic Bayesian networks and propose a formal definition of anomaly detection and localization based on our modeling. Based on this we derive the proposed methodology (Section~\ref{sec:methodology}). Finally, we conclude our paper by presenting an experimental evaluation of the proposed model-based drift explanation methodology (Section~\ref{sec:exp}) and discussing our findings (Section~\ref{sec:discussion}).



\section{Modeling Critical Infrastructure\label{sec:wdns}}


As discussed before, a substantial part of the critical infrastructure focuses on supplying citizens and industries with the goods necessary for daily life and economies to work efficiently -- for example, WDNs, EGs, and gas networks cover the basic needs of the citizens. Besides, transportation and telecommunication networks are crucial to ensure economic wealth. All these instances share one key property: As exemplary visualized in Fig. \ref{fig:ltwon:map} for a WDN, they can be modeled as graphs consisting of edges representing pipes, cables, streets, etc., and nodes representing their junctions or additional components like pumps, transformers, etc. Next to describing the overall topology, the properties of the components can be described by adding edge and node features. Depending on the type of infrastructure, the exact topology and the edge and node features might be unknown. For example, considering WDNs the exact diameters of certain pipes are frequently unknown and elevation levels within the network are rarely measured due to the associated cost~\cite{eggimann_potential_2017}. Besides, usually, networks in critical infrastructure evolve due to extensions and naturally occurring changes, e.g. the roughness of pipes might change due to reactions with substances in the water.

Facilitated by technological advancement and decreasing costs, critical infrastructure is increasingly equipped with sensor technologies~\cite{eggimann_potential_2017}. Different types of sensors can be installed at both nodes and edges throughout the system. However, as their installation and maintenance are costly, frequently only sparse real-time sensor measurements are available to monitor and control the networks. As mentioned before, in many systems, some kind of product is supplied to the end-user. Thus, the demands are heavily impacting the dynamics in the network as can be seen in the water pressure measurements in Fig. \ref{fig:ltwon:pressure}. While it is technologically possible to measure the real-time demand of the end-users by so-called smart meters, usually this is not done exhaustively due to the associated cost and privacy concerns \cite{smartmeters}. 

Since damage, destruction, and disruption of critical infrastructure pose major risks to the safety and well-being of citizens, monitoring for anomalous behavior is mandatory. Anomalies might be caused by a deterioration of materials, natural disasters, or different types of malicious behavior, e.g. terrorism or criminal activity. Monitoring efforts can be divided into multiple steps: First, anomalous behavior needs to be detected. Afterward, the event has to be analyzed further to decide on an appropriate reaction. This analysis might contain identifying the type of anomaly, its magnitude, and its location~\cite{reppa_sensor_2016}.  

The focus of this work is anomaly localization. However, our analysis of the role of anomalies in critical infrastructure, which we will present in the next section, justifies the choice of many methods for anomaly detection.

\section{A Causal Model for Modeling Anomalies in Critical Infrastructure\label{sec:causal-model}}


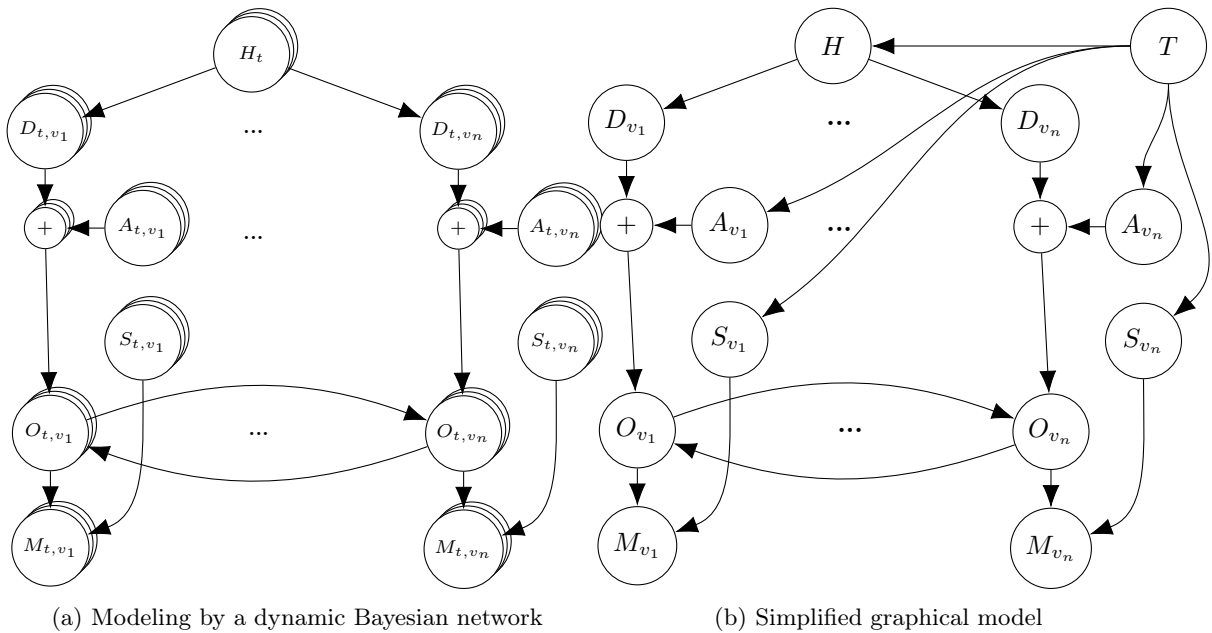
\begin{figure*}[t]
\centering
\begin{subfigure}{0.45\textwidth}
    
    \centering
    \scriptsize
    \pgfdeclarelayer{bg}    
    \pgfsetlayers{bg,main}  
    \begin{tikzpicture}[minimum height=1cm, every shadow/.style={fill=white,shadow scale=1, shadow xshift=.5ex, shadow yshift=-.5ex, every shadow}, every node/.style={fill=white, draw,double copy shadow}, node distance=0.5cm]
      
        \node at (0,0) [circle] (ht){$H_t$};
        \node[below left = 0.3cm and 2cm of ht]  [circle, draw] (d1){$D_{t,v_1}$};
        \node[below right =0.3cm and 2cm of ht]  [circle, draw] (dn){$D_{t,v_n}$};

        \node[below = of d1]  [circle, draw, minimum height=0.5cm] (sum){$+$};
        \node[below = of dn]  [circle, draw, minimum height=0.5cm] (sumn){$+$};

        \node[right = of sum]  [circle, draw] (l1){$\anomalyI_{t,v_1}$};
        \node[right = of sumn]  [circle, draw] (ln){$\anomalyI_{t,v_n}$};

        \node[below = of l1]  [circle, draw] (s1){$\anomalyII_{t,v_1}$};
        \node[below = of ln]  [circle, draw] (sn){$\anomalyII_{t,v_n}$};

        \node[below left = 2cm and 0.5cm of l1]  [circle, draw] (p1){$\observable_{t,v_1}$};
        \node[below left = 2cm and 0.5cm of ln]  [circle, draw] (pn){$\observable_{t,v_n}$};

        \node[below  = of p1]  [circle, draw] (m1){$\measurement_{t,v_1}$};
        \node[below  =  of pn]  [circle, draw] (mn){$\measurement_{t,v_n}$};
      
        \begin{pgfonlayer}{bg} 
        \node[right = 2cm of d1, draw=white] (dots) {\textbf{...}};      
        \node[below = 0.4cm of dots, draw=white] (dots2) {\textbf{...}};      
        \node[right = 2cm of p1, draw=white] (dots3) {\textbf{...}};
        \end{pgfonlayer}
      
        \draw[-{Latex[length=3mm]}] (ht) -- (d1);
        \draw[-{Latex[length=3mm]}] (ht) -- (dn);
            
      \draw[-{Latex[length=3mm]}] (d1) -- (sum);
      \draw[-{Latex[length=3mm]}] (dn) -- (sumn);
      \draw[-{Latex[length=3mm]}] (l1) -- (sum);
      \draw[-{Latex[length=3mm]}] (ln) -- (sumn);
      \draw[-{Latex[length=3mm]}] (sum) -- (p1);
      \draw[-{Latex[length=3mm]}] (sumn) -- (pn);
      
      \draw[-{Latex[length=3mm]}] (s1) edge [out=270,in=20]  (m1);
      \draw[-{Latex[length=3mm]}] (sn) edge [out=270,in=20] (mn);
      \draw[-{Latex[length=3mm]}] (p1) -- (m1);
      \draw[-{Latex[length=3mm]}] (pn) -- (mn);

      \draw[-{Latex[length=3mm]}] (p1) edge [out=20,in=160]  (pn);
      \draw[-{Latex[length=3mm]}] (pn) edge [out=200,in=340] (p1);

    \end{tikzpicture}
    \caption{Modeling by a dynamic Bayesian network}
    \label{fig:bayes-net}
\end{subfigure}%
\begin{subfigure}{0.45\textwidth}
    \centering
    \pgfdeclarelayer{bg}    
\pgfsetlayers{bg,main}  
    \begin{tikzpicture}[minimum height=1cm, every node/.style={fill=white, draw}, node distance=0.5cm]
      
        \node at (0,0) [circle] (ht){$H$};
        \node[right =3.4cm of ht]  [circle, draw] (t){$T$};
        \node[below left = 0.3cm and 2cm of ht]  [circle, draw] (d1){$D_{v_1}$};
        \node[below right =0.3cm and 2cm of ht]  [circle, draw] (dn){$D_{v_n}$};

        \node[below = of d1]  [circle, draw, minimum height=0.5cm] (sum){$+$};
        \node[below = of dn]  [circle, draw, minimum height=0.5cm] (sumn){$+$};

        \node[right = of sum]  [circle, draw] (l1){$\anomalyI_{v_1}$};
        \node[right = of sumn]  [circle, draw] (ln){$\anomalyI_{v_n}$};

        \node[below = of l1]  [circle, draw] (s1){$\anomalyII_{v_1}$};
        \node[below = of ln]  [circle, draw] (sn){$\anomalyII_{v_n}$};

        \node[below left = 2cm and 0.5cm of l1]  [circle, draw] (p1){$\observable_{v_1}$};
        \node[below left = 2cm and 0.5cm of ln]  [circle, draw] (pn){$\observable_{v_n}$};

        \node[below  = of p1]  [circle, draw] (m1){$\measurement_{v_1}$};
        \node[below  =  of pn]  [circle, draw] (mn){$\measurement_{v_n}$};
      
        \begin{pgfonlayer}{bg} 
        \node[right = 2cm of d1, draw=white] (dots) {\textbf{...}};      
        \node[below = 0.4cm of dots, draw=white] (dots2) {\textbf{...}};      
        \node[right = 2cm of p1, draw=white] (dots3) {\textbf{...}};
        \end{pgfonlayer}
      
        \draw[-{Latex[length=3mm]}] (t) -- (ht);
        \draw[-{Latex[length=3mm]}] (ht) -- (d1);
        \draw[-{Latex[length=3mm]}] (ht) -- (dn);
            
      \draw[-{Latex[length=3mm]}] (d1) -- (sum);
      \draw[-{Latex[length=3mm]}] (dn) -- (sumn);
      \draw[-{Latex[length=3mm]}] (l1) -- (sum);
      \draw[-{Latex[length=3mm]}] (ln) -- (sumn);
      \draw[-{Latex[length=3mm]}] (sum) -- (p1);
      \draw[-{Latex[length=3mm]}] (sumn) -- (pn);
      \draw[-{Latex[length=3mm]}] (s1) edge [out=270,in=20]  (m1);
      \draw[-{Latex[length=3mm]}] (sn) edge [out=270,in=20] (mn);
      \draw[-{Latex[length=3mm]}] (p1) -- (m1);
      \draw[-{Latex[length=3mm]}] (pn) -- (mn);

      \draw[-{Latex[length=3mm]}] (t) edge [out=180,in=20]  (l1);
      \draw[-{Latex[length=3mm]}] (t) edge [out=270,in=90]  (ln);
      \draw[-{Latex[length=3mm]}] (t) edge [out=180,in=35]  (s1);
      \draw[-{Latex[length=3mm]}] (t) edge [out=270,in=40]  (sn);
      \draw[-{Latex[length=3mm]}] (p1) edge [out=20,in=160]  (pn);
      \draw[-{Latex[length=3mm]}] (pn) edge [out=200,in=340] (p1);

    \end{tikzpicture}
    \caption{Simplified graphical model}
    \label{fig:used_net}
    \end{subfigure}
    \caption{Visualization of modeling of anomalies in critical infrastructure}
\end{figure*}

A common way to model and obtain a better understanding of complex dynamic systems is by using a Bayesian network~\cite{russell2010artificial,pearl2009causality}. Modeling conditional dependencies of a set of variables over time is especially targeted by a variant called \emph{dynamic Bayesian networks (DBN)}~\cite{dagum1992dynamic,russell2010artificial} which use the same acyclic graph at each time point and allow additional dependencies between consecutive time points. Therefore, DBNs constitute a suitable tool to model the dynamics in critical infrastructure and the influence of anomalies.

We will first describe a Bayesian network modeling an anomaly-free system and then discuss its extensions to scenarios containing anomalies. 
We start by describing the infrastructure as a graph $G$, e.g. edges correspond to pipes in WDNs or power lines in EGs and nodes to junctions or devices like pumps, tanks, power plants, etc.
As visualized in Figure~\ref{fig:bayes-net} at each node $v\in V(G)$ and each time step $t$ an observable quantity $\observable_{t,v}$, e.g. pressure in WDNs or voltage in EGs, describes the network state. If a sensor device is available at node $v$, we can obtain a measurement $\measurement_{t,v}$ of it. Each observable $\observable_{t,v}$ depends on the observables at time $t-1$ of the considered node and its neighborhood in $G$, i.e. $\{\observable_{t-1,v}\} \cup\{\observable_{t-1,w}|w \in N_G(v)\}$ where $N_G(v)$ is the direct neighborhood of $v$ in $G$. Besides,  $\observable_{t,v}$ additionally depends on the \emph{demand} at the respective position and time $\demand_{t,v}$. A demand might be the amount of water or power taken from the network. The demands in turn are independent of each other but each demand $\demand_{t,v}$ depends on its respective past $\demand_{t-1,v}, \demand_{t-2,v},\cdots$. Besides, all demands depend on an additional \emph{hidden cause} $\hiddencause_t$ which models outer circumstances like the weather or public holidays. The hidden cause only depends on its history, i.e. $\hiddencause_{t-1}, \hiddencause_{t-2}, \dots$.

In a demand-driven system, we can model two types of anomalies. An anomaly of type I changes the overall system dynamics, e.g. a leakage in a WDN. Modeling this kind of anomaly can be accomplished by introducing additional anomaly demands as commonly done in the water literature, e.g.~\cite{li_fast_2022}. In our network, $\anomalyI_{t,v}$ models the additional demand at every node $v \in V(G)$. Anomalies of type I only depend on the time $t$. Adding them introduces an additional dependency of the observables $\observable_{t,v}$ on $\anomalyI_{t,v}$ for each node $v$. 
In contrast, anomalies of type II influence only a single measurement. A common instance of this would be a failing or degrading sensor. 
We can formalize them as an additional node $\anomalyII_{t,v}$ that depends on its own past and influences $\measurement_{t,v}$ only.

Notice that though we made use of discrete time to make things easier, we can assume that $\observable_{t,v}$ is obtained by a set of differential equations and thus get a good approximation using a sufficiently small time step size.

A common way to express the interplay of the variables functionally is to rely on the formalism of \emph{functional models}~\cite{pearl2009causality,zhang2015estimation}. Given an underlying, directed graph with nodes $V \times \T$, the value $X_{v,t}$ of node $v \in V$ at time $t \in \T$ can be computed using a deterministic function $f_{v,t}$ that takes the values $X_{\pa(v,t)}$ of the parents and independent noise $\varepsilon_{v,t}$, i.e. we have $X_{v,t} = f_{v,t}(X_{\pa(v,t)},\varepsilon_{v,t})$~\cite{zhang2015estimation}.
Since in the considered critical infrastructure systems the underlying physics describing the interplay of the components do not change over time, in this setting assuming that the function $f_{v,t}$ does not depend on time, i.e. $f_{v,t} = f_{v,s}$ is reasonable.

Thus, we obtain a function $\observable_v : \R^{N_G(v) \cup \{v\}} \times \R \times \R \to \R$ for every node $v$ in $G$ that computes the measurements of the next time-step using the last observables and the current demand while accounting for anomalies of type I, i.e. 
$\observable_{t,v} = \observable_v( (\observable_{t-1,w})_{w \in N_G(v) \cup \{v\}}, \demand_{t,v}+\anomalyI_{t,v}, \varepsilon_{v,t}))$,  
while anomalies of type II are taken care of by the measurements nodes, e.g. if $\anomalyII_{t,v} = 1$ the sensor is online, e.g. $\measurement_{t,v} \mid [\anomalyII_{t,v} = 1] \sim \mathcal{N}(\observable_{t,v}, \sigma_1)$,  if $\anomalyII_{t,v} = 0$ the sensor is broken, e.g. $\measurement_{t,v} \mid [\anomalyII_{t,v} = 0] \sim \mathcal{N}(0, \sigma_0)$. 
For reasons of simplicity, we will assume that the behavior of the observables given the demands is mainly deterministic, i.e. $\observable_v$ is invariant with respect to $\varepsilon_{v,t}$ for all $v$. Notice that both observables and demand can be modeled using a single real number for each node in this setup. However, replacing those with a vector is straightforward. 

Using this setup we can now formally define the problem of anomaly detection and localization:

\begin{definition}
We say that an \emph{anomaly occurs} at node $v$ and time $t$ if $\anomalyI_{t-1,v} \neq \anomalyI_{t,v}$. The task of \emph{anomaly detection} refers to the problem of determining whether there occurs an anomaly at time $t$, i.e. determine the set $\{t \mid \exists v \in V(G) :  \anomalyI_{t-1,v} \neq \anomalyI_{t,v} \}$. The task of \emph{anomaly localization} refers to the task to determine whether there occurs an anomaly at node $v$, i.e. determine the set $\{v \mid \exists t : \anomalyI_{t-1,v} \neq \anomalyI_{t,v}\}$. 

The definition of the occurrence, detection, and localization of \emph{sensor faults} is analogous if we replace $\anomalyI$ by $\anomalyII$. 
\end{definition}


\section{Methodology}\label{sec:methodology}
Based on the formalization of the problem we provided in the prior section, we now discuss how anomaly detection and localization can be accomplished by relying on the notion of concept drift. 

\begin{figure*}[t]
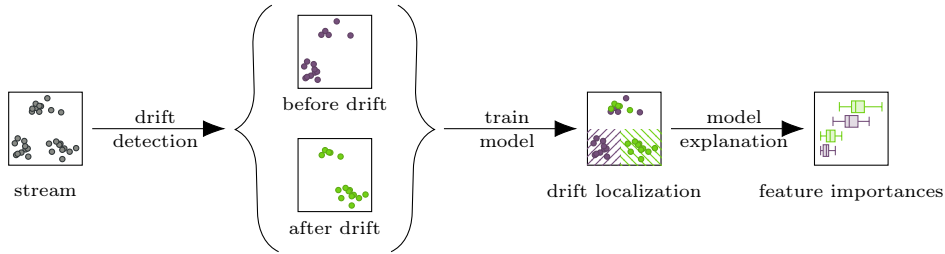

    \centering
    \tikzLocalization
    \caption{Visualization of model-based drift explanation scheme}
    \label{fig:driftexp}
\end{figure*}
\subsection{Modelling Anomalies as Concept Drift}
Considering the dynamics in the considered networks over longer periods, only the occurrence of anomalies strongly depends on the time. While demands are ever-changing they tend to follow relatively stable patterns, which will average out over longer time periods. Thus, we propose to simplify the network which we proposed in Section~\ref{sec:causal-model} (see Figure~\ref{fig:bayes-net}). 
As can be seen in Figure~\ref{fig:used_net}, we have to introduce a random variable $T$ that represents the time to model the strong time dependency of the anomalies. 
We replace $\hiddencause_t$, $\anomalyI_t$, and $\anomalyII_t$ by the  sub-networks $T \to \hiddencause$, $T \to \anomalyI$, and $T \to \anomalyII$ with $\hiddencause \mid [T = t] \sim \hiddencause_t$ and analogous for $\anomalyI$ and $\anomalyII$. And we assume that the conditional $\demand_t\mid \hiddencause_t$ is time independent so that  $\demand \mid [\hiddencause = h] \sim \demand_t\mid [\hiddencause_t = h]$.

Looking at the resulting dependencies in our modeling it becomes apparent that anomalies of type I and II are the only variables that depend on time assuming the effects of the hidden causes average out as successfully shown by~\cite{vaquet2024investigating}. Besides, the anomalies influence the measurements. Thus, changes in $\anomalyI_{v}$ and $\anomalyII_{v}$ are reflected in the observed data stream. An established way to describe such a phenomenon is the notion of \emph{concept drift} or drift for shorthand. While commonly drift is defined as change in the data generating distribution~\cite{asurveyonconceptdriftadaption}, on a statistical level it makes sense to consider a general statistical interdependence of data and time:
\begin{definition}
A time $\T$ indexed collection of distribution $\D_t$ on data $\X$ with observation probability in time $P_T$, i.e. $P_T$ is a distribution on $\T$ and $\D_t$ is a Markov kernel from $\T$ to $\X$, is said to have \emph{drift} iff data and time are dependent, i.e. for $T \sim P_T$ and $X \mid [T = t] \sim \D_t$ we have $T \not\!\!\indep X$~\cite{DAWIDD}.
\end{definition}
In the remainder of this section, we discuss how this formalization of anomalies as concept drift can be used to obtain anomaly detection and localization strategies.

\subsection{Anomaly Detection\label{sec:meth-detection}}
The task of determining whether or not there is drift, referred to as \emph{drift detection}, directly aligns with detecting anomalies in the considered setup. Numerous approaches for drift detection exist in the body of literature~\cite{asurveyonconceptdriftadaption,frontiers}. Many of the state-of-the-art anomaly detection schemes implement supervised drift detection which relies on model loss as a proxy for drift in the underlying distribution. Additionally, \cite{vaquet2024investigating} experimentally showed the superior suitability of distribution-based drift detection schemes for detecting leakages in WDNs.

\subsection{Anomaly Localization}

Next to drift detection, there are additional tools like drift localization and drift explanations that can be used to analyze the drift further~\cite{frontiers}. 
In this section, we will first discuss how anomalies of type II can be localized by employing model-based drift explanations. Afterward, we will argue why this strategy can also be applied to anomalies of type I.

\subsubsection{Type II}
To localize anomalies we need to accumulate additional knowledge on the drift. The presence of $\anomalyII_v$ only affects the measurement $\measurement_v$ but not those collected at other nodes. Thus, intuitively speaking explaining the change in the collected data, we would argue that $\measurement_v$ is the only measurement that changed, assuming that the time dependency of $\hiddencause$ on $T$ smoothes out on the considered window. A particularly suited methodology for obtaining such explanations is the model-based drift explanation scheme \cite{neucomp}. 

The main idea of model-based explanations is to reduce the problem of explaining drift to explaining a suitably trained model~\cite{esann2022,neucomp}:
As pointed out by \cite{DAWIDD,neucomp} drift is encoded in the dependence of $X$ and $T$. Thus, as visualized in Figure \ref{fig:driftexp}, by training a model $h(t \mid x)$ to estimate $\P[T = t \mid X = x]$ we essentially extract information about the drift. Given the time of the drift which can be obtained by applying drift detection as discussed in Section~\ref{sec:meth-detection}, we can train a simple binary classifier discriminating between samples from before and after the drift. While \cite{esann2022,neucomp} presented how this idea can be combined with a range of local and global explanation techniques, for detecting anomalies of type I feature-based explanation schemes are of particular relevance. 


\subsubsection{Type I}
We will now consider anomalies of type I and argue why we can localize them by the described model-based explanations scheme as well.

One way to argue is to further simplify the DBN. Assuming that the influence of the hidden cause on the demands is comparably weak and that the measurements are accurate, Figure~\ref{fig:used_net} further simplifies. As we can incorporate the demands (and anomaly demands) into the observables we are left with $\observable_v\;v \in V(G)$ and $T$. 
Assuming an anomaly occurs at nodes $V_\anomalyI \subset V(G)$ we can compute the values at the remaining nodes without referring to $T$ so that the entire network becomes
\begin{align*}
    T \to \observable_{V_\anomalyI} \to \observable_{V(G) \setminus V_\anomalyI}.
\end{align*}
In particular, it becomes apparent that a machine learning model that reconstructs $T$ from $\observable$ does not benefit from taking $\observable_{V(G) \setminus V_\anomalyI}$ into account assuming it is provided with $\observable_{V_\anomalyI}$. Thus, if we apply model-based drift explanations using a feature-based explanation scheme it is reasonable to assume that high values are assigned to the features in $V_\anomalyI$ and low values to the features contained in ${V(G) \setminus V_\anomalyI}$. 

While this gives us an intuitive justification for employing model-based drift explanation techniques, when analyzing Figure \ref{fig:used_net} from a theoretical viewpoint there might be feedback effects as $\observable_{v_1},\dots, \observable_{v_n}$ are connected via paths through the $\demand_{v_1},\dots,\demand_{v_1}$, and $\hiddencause$. Thus, a change in one node might have global effects.

In the following, we will analyze the network dynamics to show that the effects of an anomaly of type I are particularly prominent close to the location of the anomaly and decrease as we move away from the anomalous node.


As each $\observable_v$ computes the value at every single node, we can easily combine those and end up with a map that updates the measurements of the entire network $\observable : \R^{V(G)} \times \R^{V(G)} \to \R^{V(G)}, \observable_t = \observable(\observable_{t-1},\demand_t+\anomalyI_t)$, where we suppress the node-index to indicate that we consider all nodes of the same type at once, i.e. $\observable_t = (\observable_{t,v})_{v \in V(G)}$ and analogous for $\demand_t$ and $\anomalyI_t$. 
Here, $t$ represents a mere index of a time series, other than before it has no statistical meaning. 

Commonly the network dynamics in critical infrastructure are modelled by steady-state simulations, e.g.~\cite{rossman_epanet_2000}. The function $\observable$ models one step in a simulation. 
Therefore, it is reasonable to believe that for the same demands, the measurements become more similar in the next time-step, i.e. $\Vert \observable(\observable_1,\demand) - \observable(\observable_2,\demand) \Vert_1 < \Vert \observable_1-\observable_2 \Vert_1$. Formally, this can be expressed by the notion of Lipschitz continuity with constant $<1$. Conversely, under the Lipschitz assumption, the dynamics in the system become a steady state. Indeed, we can show that the measurement values do not depend on initial measurements but on the demands alone, as one would intuitively expect (see appendix, Lemma~\ref{lem:bannach}).
\begin{toappendix}
\begin{lemma}
\label{lem:bannach}
Let $\demand_\cdot : \Z \to \R^{V(G)}, t \mapsto \demand_t$ be a demand pattern and assume that the transition function $\observable(\cdot, d)$ is uniformly Lipschitz continuous for all $d \in \R^{V(G)}$, i.e. we have $\sup_{d \in \R^{V(G)}} \Vert \observable(p,d) - \observable(p',d) \Vert_1 \leq C_s \Vert p-p' \Vert_1$, with constant $C_s < 1$. Then for every time-point $t_0$ it holds that $P$ has exactly one fix-point, i.e. there exists exactly one $p \in \R^{V(G)}$ such that $p = \observable(p, \demand_{t_0})$. 

Furthermore, denote by $\observable_0^{(t_0)}(p,D_\cdot) = p,\; \observable_{t+1}^{(t_0)}(p,D_\cdot) = \observable_t^{(t_0+1)}(\observable(p,D_{t_0}),D_\cdot)$. Then $\observable_t^{(t_0)}(p,D_\cdot)$ becomes independent of the choice of $p$ and $t_0$ as time passes by, i.e. for all $p,p' \in \R^{V(G)}$ and $M \in \Z_{\geq 0}$ we have
\begin{align*}
    \Vert \observable_t^{(t_0)}(p,D_\cdot) - \observable_{t+M}^{(t_0-M)}(p',D_\cdot) \Vert_1 \to 0 \text{ as } t-t_0 \to \infty.
\end{align*}
In particular, $t \mapsto \observable_t(D_\cdot) := \lim_{n \to \infty} \observable_{t+n}^{(-n)}(p,D_\cdot)$ is well defined and constant for constant demand patterns $D_\cdot = D$. 
\end{lemma}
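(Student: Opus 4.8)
The plan is to observe that, for any fixed demand, the one-step map is a strict $\ell^1$-contraction, and to combine the Banach fixed-point theorem with the elementary fact that composing $t$ contractions of modulus $C_s$ yields a contraction of modulus $C_s^t$. For the first assertion I would directly invoke the contraction mapping (Banach) theorem: by hypothesis the map $\observable(\cdot, \demand_{t_0}) : \R^{V(G)} \to \R^{V(G)}$ satisfies $\Vert \observable(p, \demand_{t_0}) - \observable(p', \demand_{t_0})\Vert_1 \le C_s \Vert p - p'\Vert_1$ with $C_s < 1$, and $(\R^{V(G)}, \Vert\cdot\Vert_1)$ is a complete metric space, so there is exactly one $p$ with $p = \observable(p, \demand_{t_0})$.

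Next I would unfold the recursion to express the iterates as an explicit composition. Writing $g_i := \observable(\cdot, D_i)$ for the one-step map driven by the demand at time $i$, an induction on $t$ from the defining recursion gives
\[
\observable_t^{(t_0)}(p, D_\cdot) = (g_{t_0 + t - 1} \circ \cdots \circ g_{t_0})(p),
\]
so $\observable_t^{(t_0)}$ is precisely the trajectory that starts at $p$ at time $t_0$ and applies the $t$ transition maps driven by $D_{t_0}, \dots, D_{t_0 + t - 1}$. Splitting this composition at the index $t_0$ then yields the semigroup identity
\[
\observable_{t+M}^{(t_0 - M)}(p', D_\cdot) = \observable_t^{(t_0)}\!\big( \observable_M^{(t_0 - M)}(p', D_\cdot),\, D_\cdot \big),
\]
i.e. running $t+M$ steps from $t_0 - M$ is the same as first running $M$ steps up to time $t_0$ and then $t$ further steps. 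Getting the shifted super- and subscripts to line up here is the one spot that needs genuine care.

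For the convergence claim I set $p'' := \observable_M^{(t_0 - M)}(p', D_\cdot)$, which is a fixed vector of $\R^{V(G)}$ not depending on $t$. Since each $g_i$ is $C_s$-Lipschitz, the $t$-fold composition $\observable_t^{(t_0)}$ is $C_s^t$-Lipschitz, so the semigroup identity gives
\[
\Vert \observable_t^{(t_0)}(p, D_\cdot) - \observable_{t+M}^{(t_0 - M)}(p', D_\cdot) \Vert_1 = \Vert \observable_t^{(t_0)}(p) - \observable_t^{(t_0)}(p'') \Vert_1 \le C_s^t \Vert p - p'' \Vert_1,
\]
which tends to $0$ as $t \to \infty$ (equivalently $t - t_0 \to \infty$ for fixed $t_0$), because $\Vert p - p''\Vert_1$ is a finite constant and $C_s < 1$. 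Finally, for a constant demand pattern $D_\cdot \equiv D$ all maps coincide with $g := \observable(\cdot, D)$, so $\observable_{t+n}^{(-n)}(p, D) = g^{t+n}(p)$; the Banach theorem already gives $g^{k}(p) \to p^\ast$ (the unique fixed point of $g$) as $k \to \infty$, independently of $p$, whence $\observable_t(D) = \lim_{n\to\infty} g^{t+n}(p) = p^\ast$ exists and is independent of $t$, proving the limit is well defined and constant.

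I expect the principal obstacle to be the bookkeeping in the semigroup identity, and in particular making sure the factor driving the limit is $C_s^t$ (so that convergence is forced by $t \to \infty$) while the prefactor $\Vert p - p''\Vert_1$ remains a fixed, finite constant independent of $t$.
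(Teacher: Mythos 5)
Your proof is correct and follows essentially the same route as the paper: both invoke Banach's fixed-point theorem for existence and uniqueness of the fixed point, and both obtain the convergence claim by iterating the one-step $C_s$-Lipschitz bound along the trajectory, your explicit composition/semigroup identity being just a more careful rendering of the paper's inductive telescoping. (Minor aside: your exponent $C_s^{t}$ is the correct count of contraction steps where the paper writes $C_s^{t-t_0}$; this does not affect the conclusion.)
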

\begin{proof}
First, prove the second statement: We apply the same argument as used in Banach's fix-point theorem. To do so first notice that we can rewrite $\observable_{t+1}^{(t_0)}(p,D_\cdot) = \observable(\observable_t^{(t_0)}(p,D_\cdot),D_{t_0+t})$ and hence
\begin{align*}
    &\Vert \observable_t^{(t_0)}(p,D_\cdot) - \observable_{t+M}^{(t_0-M)}(p',D_\cdot) \Vert_1 
    \\&= \Vert \observable(\observable_{t-1}^{(t_0)}(p,D_\cdot),D_{t_0+t-1}) - \observable(\observable_{t+M-1}^{(t_0-M)}(p,D_\cdot),D_{t_0-M+t+M-1}) \Vert_1
    \\&\leq C_s \Vert \observable_{t-1}^{(t_0)}(p,D_\cdot) - \observable_{t+M-1}^{(t_0-M)}(p',D_\cdot) \Vert_1 
    \\&\leq \cdots \leq C_s^{t-t_0} \underbrace{\Vert p-\observable_{M}^{(t_0-M)}(p',D_\cdot) \Vert_1}_{\leq K} \xrightarrow{t \to \infty} 0. 
\end{align*}

For the second statement proceed as in the proof of Banach's fix-point theorem using that as $t_0$ is fixed $\observable(\cdot, D_{t_0})$ is a contraction of $\R^{V(G)}$ onto itself.
\end{proof}
\end{toappendix}

Furthermore, it is also reasonable that a small change in demand at one node will for a single time step results in a small change of the observable at that node only, i.e. is continuous. A type of continuity that is commonly used when dealing with differential equations is $\alpha$-Hölder continuity which is given by $| \observable_v(\observable, \demand_1) - \observable_v(\observable, \demand_2) | \leq C |\demand_1-\demand_2|^\alpha$ for some constants $C > 0$ and $0 < \alpha \leq 1$. Under this assumption, we can show that the entire dynamics measured depend continuously on all input demands. Details can be found in the appendix (see Lemma~\ref{lem:hoelder}).

As a consequence, we can make statements about the mean measurements: when considering demands over a longer period of time we observe that the oscillations cancel out. Under our assumptions, this effect propagates through to the measurements. 
\begin{toappendix}
\begin{lemma}
\label{lem:hoelder}
Assume that the transition function $\observable$ is Lipschitz continuous in the first and $\alpha$-Hölder continuous second argument, i.e. $\Vert \observable(p,d) - \observable(p',d') \Vert_1 \leq C_s \Vert p-p' \Vert_1 + C_d \Vert d-d' \Vert_1^\alpha$ with $C_s < 1$. Then it holds:
\begin{enumerate}
    \item For constant demands, the pressures are $\alpha$-Hölder continuous with constant $C_d/(1-C_s)$.
    \item The pressures of a stochastic demand pattern are approximated by the pressures of the mean demand pattern up to $C_d/(1-C_s)$ times the $\alpha^\text{th}$-power of the sum of the demand wise standard deviations.
\end{enumerate}
\end{lemma}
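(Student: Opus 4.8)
The plan is to leverage the contraction structure already secured by Lemma~\ref{lem:bannach}: for every constant demand vector $d$ the map $\observable(\cdot, d)$ is a contraction (since $C_s < 1$) and hence has a unique fixed point, which we denote $\observable_\infty(d)$, i.e. $\observable_\infty(d) = \observable(\observable_\infty(d), d)$. Both assertions then become statements about the regularity of the steady-state map $d \mapsto \observable_\infty(d)$ and how it commutes with averaging. The only prerequisite, namely that these fixed points exist and are well defined, is exactly what Lemma~\ref{lem:bannach} supplies.

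For the first assertion I would compare the fixed points of two constant demands $d_1, d_2$ directly. Writing $p_i = \observable_\infty(d_i)$ and inserting the fixed-point identity into the assumed joint Lipschitz/Hölder bound gives
\begin{align*}
\Vert p_1 - p_2 \Vert_1 &= \Vert \observable(p_1, d_1) - \observable(p_2, d_2) \Vert_1 \\ &\leq C_s \Vert p_1 - p_2 \Vert_1 + C_d \Vert d_1 - d_2 \Vert_1^\alpha .
\end{align*}
Because $C_s < 1$, the term $C_s \Vert p_1 - p_2 \Vert_1$ can be absorbed onto the left-hand side, leaving $\Vert p_1 - p_2 \Vert_1 \leq \tfrac{C_d}{1-C_s} \Vert d_1 - d_2 \Vert_1^\alpha$, which is precisely $\alpha$-Hölder continuity with the claimed constant.

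For the second assertion I would read ``stochastic demand pattern'' as a random (but time-constant) demand vector $D$ with mean $\E[D]$, and compare the averaged steady-state pressure $\E[\observable_\infty(D)]$ with the steady-state pressure $\observable_\infty(\E[D])$ of the averaged demand. The chain of estimates I have in mind is: first apply Jensen to the convex norm to pull the expectation outside, $\Vert \E[\observable_\infty(D) - \observable_\infty(\E[D])] \Vert_1 \leq \E \Vert \observable_\infty(D) - \observable_\infty(\E[D]) \Vert_1$; next apply the first assertion pointwise inside the expectation to bound the integrand by $\tfrac{C_d}{1-C_s} \Vert D - \E[D] \Vert_1^\alpha$; then invoke Jensen in the reverse direction, using concavity of $x \mapsto x^\alpha$ for $0 < \alpha \leq 1$, to obtain $\E \Vert D - \E[D] \Vert_1^\alpha \leq (\E \Vert D - \E[D] \Vert_1)^\alpha$; and finally expand $\Vert D - \E[D] \Vert_1 = \sum_{v \in V(G)} |D_v - \E[D_v]|$ and bound each mean absolute deviation by its standard deviation, $\E|D_v - \E[D_v]| \leq \sqrt{\Var(D_v)} = \sigma_v$. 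Collecting these yields the bound $\tfrac{C_d}{1-C_s} (\sum_v \sigma_v)^\alpha$, matching the statement exactly.

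I expect the main obstacle to be not any single inequality, since each step is elementary, but rather pinning down the intended meaning of the informally phrased second assertion, in particular whether ``the pressures of a stochastic demand pattern'' refers to the steady-state pressure of a random constant demand or to the time-averaged behaviour of a genuinely fluctuating pattern. I would adopt the former reading because it is the one for which the stated constant and the ``sum of demand-wise standard deviations'' emerge exactly, and because it connects cleanly to the steady state guaranteed by Lemma~\ref{lem:bannach}. A secondary point requiring care is keeping the two Jensen applications in opposite directions consistent: the norm is convex, so the expectation moves out with an upper bound, whereas $x \mapsto x^\alpha$ is concave, so $\E[Y^\alpha] \leq (\E Y)^\alpha$; getting these directions right is what makes the standard-deviation estimate fall out cleanly rather than leaving an uncontrolled higher moment.
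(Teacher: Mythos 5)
Your proposal is correct and follows essentially the same route as the paper: part 1 rests on the fixed-point identity combined with the Lipschitz/H\"older bound (you absorb the $C_s$-term directly, the paper iterates and sums the geometric series, which is equivalent), and part 2 applies part 1 pointwise, then Jensen for the concave map $x \mapsto x^\alpha$, then bounds each mean absolute deviation by the standard deviation. The only cosmetic difference is that your extra initial Jensen step targets $\Vert \E[\observable(D)] - \observable(\E[D]) \Vert_1$, whereas the paper directly bounds the slightly stronger quantity $\E\bigl[\Vert \observable(D) - \observable(\E[D]) \Vert_1\bigr]$ --- which your chain of inequalities in fact also establishes along the way.
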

\begin{proof}
    \textbf{1.: } Let $D,D'$ be arbitrary demands. It holds
    \begin{align*}
           \Vert \observable(D) - \observable(D') \Vert_1 
        &= \Vert \observable(\observable(D),D) - \observable(P(D'),D') \Vert_1 
        \\&\leq C_s \Vert \observable(D) - \observable(D') \Vert_1 + C_d \Vert D-D' \Vert_1^\alpha
        \\&\leq \dots \leq \underbrace{C_s^{n+1}}_{\to 0} \Vert \observable(D) - \observable(D') \Vert_1 + \underbrace{\left(\sum_{i = 0}^{n} C_s^i \right)}_{\to 1/(1-C_s)} C_d \Vert D-D' \Vert_1^\alpha
        \\&\xrightarrow{n \to \infty} \frac{C_d}{1-C_s} \Vert D-D' \Vert_1^\alpha.
    \end{align*}
    Thus for every bounded set of demands the function is locally $\alpha$-Hölder continuous. As the constant, in contrast to the rate of convergence, does not depend on the demands, the function is $\alpha$-Hölder continuous. 
    
    \textbf{2.: } Let $D : \Omega \to \R^{V(G)}$ be a stochastic demand pattern with component wise mean $\mu = \E[D]$ (this is well defined as $|V(G)|$ is finite). Then it holds
    \begin{align*}
        \Vert \observable(D(\omega)) - \observable(\mu) \Vert_1 &\overset{1.}{\leq} \frac{C_d}{1-C_s} \Vert D(\omega) - \mu \Vert_1^\alpha.
    \end{align*}
    Taking expectation on both sides and using that $0 < \alpha \leq 1$ so that $x^\alpha$ is concave on $\R_{\geq 0}$ we have by using Jensen's inequality applied twice
    \begin{align*}
        \Vert \observable(D) - \observable(\mu) \Vert_{L^1} 
        &\leq \frac{C_d}{1-C_s} \left(\E\left[\Vert D - \mu \Vert_1 \right]\right)^\alpha
        \\&= \frac{C_d}{1-C_s} \left(\sum_{v \in V(G)} \E\left[| D_v - \mu_v |\right]\right)^\alpha
        \\&\leq \frac{C_d}{1-C_s} \left(\sum_{v \in V(G)} \sqrt{\E\left[( D_v - \mu_v )^2\right]} \right)^\alpha
        \\&= \frac{C_d}{1-C_s}\left(\sum_{v \in V(G)} \text{Std}(D_v) \right)^\alpha.
    \end{align*}
\end{proof}
\end{toappendix}
Since the fluctuation usually causes some problems, data analysis is commonly performed using such mean values over time windows. In this case, we can show that the effect of anomalies decays exponentially fast throughout the network. Formally this can be stated as follows:
\begin{theorem}
\label{thm:decay}
    Assume that the transition function $O$ is Lipschitz continuous in the first and $\alpha$-Hölder continuous second argument, i.e. $\Vert O(o,d) - O(o',d') \Vert_1 \leq C_s \Vert o-o' \Vert_1 + C_d \Vert d-d' \Vert_1^\alpha$. Let $D \in \R^{V(G)}$ be a constant/mean demand pattern and $A \in \R^{V(G)}, A_w = \delta_{vw} a, \; l > 0$ be an anomaly at node $v$. 
    Denote by $O(D)$ the steady state observable for the demand $D$.
    If $C_s < 1/(\deg G+1)$, where $\deg G = \max_{u \in V(G)} |N_G(u)|$ is the maximal node degree in $G$, then the effect of the anomaly decays exponentially throughout $G$:
    \begin{align*}
        &| (O(D))_w - (O(D+A))_w | \\
        &\begin{cases}
        = 0, & d_G(w,v) = \infty \\
        < 
        \dfrac{C_d a^\alpha \cdot (C_s (\deg G+1))^{d_G(v,w)}}{1-C_s (\deg G+1)}, & \text{otherwise}
        \end{cases}
    \end{align*}
    where $d_G(u,v)$ denotes the distance in $G$, i.e. the length of the shortest path in $G$ connecting $u$ and $v$ or $= \infty$ if there is no such path.

    The assumption $C_s < 1/\deg G$ is necessary to assure a monotonic decrease of the effect.
\end{theorem}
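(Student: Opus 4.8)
The plan is to realize both steady states as fixed points of the one-step map and then to track how the localized demand change $A$ spreads across the graph under iteration. Write $p = O(D)$ and $p' = O(D+A)$ for the two steady-state observables, which exist and are unique by Lemma~\ref{lem:bannach}; they satisfy $p = O(p,D)$ and $p' = O(p',D+A)$. The structural fact I will lean on is \emph{locality}: as set up in Section~\ref{sec:causal-model}, the component $O_w$ depends only on the observables at $w$ and its graph-neighbours $N_G(w)$, and on the demand at $w$. Hence if two observable vectors agree outside a set $S \subseteq V(G)$, then the images under $O$ can differ only on $N_G(S)\cup S$, i.e. on nodes within graph-distance one of $S$.

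First I would set up a freeze-and-iterate comparison. Starting from $x^{(0)} = p$, define $x^{(n+1)} = O(x^{(n)}, D+A)$. Since $O(\cdot,D+A)$ has $\ell^1$ Lipschitz constant at most $C_s(\deg G+1) < 1$ under the hypothesis, Banach's argument gives $x^{(n)}\to p'$, so that $p'-p = \sum_{n\ge 0}(x^{(n+1)}-x^{(n)})$ with absolute convergence. I then analyse the increments $\Delta^{(n)} := x^{(n+1)}-x^{(n)}$ along two fronts at once, their spatial support and their total mass. For $n=0$, $\Delta^{(0)} = O(p,D+A)-O(p,D)$; as $D$ and $D+A$ agree away from $v$, locality forces $\Delta^{(0)}$ to be supported on $\{v\}$, and the Hölder bound in the demand argument gives $\Vert\Delta^{(0)}\Vert_1 = |\Delta^{(0)}_v| \le C_d \Vert A\Vert_1^\alpha = C_d a^\alpha$. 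Inductively $\Delta^{(n)} = O(x^{(n)},D+A)-O(x^{(n-1)},D+A)$, so locality makes $\Delta^{(n)}$ supported within distance $n$ of $v$; applying the Lipschitz bound componentwise and summing over $w$ (each node $u$ feeds into at most $\deg G+1$ components $O_w$, namely itself and its neighbours) gives $\Vert\Delta^{(n)}\Vert_1 \le C_s(\deg G+1)\Vert\Delta^{(n-1)}\Vert_1$, hence $\Vert\Delta^{(n)}\Vert_1 \le (C_s(\deg G+1))^n C_d a^\alpha$.

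The per-node estimate then follows by combining support with mass. For a node $w$ with $k := d_G(v,w)$, every $\Delta^{(n)}$ with $n<k$ vanishes at $w$, so $|p'_w-p_w| \le \sum_{n\ge k}|\Delta^{(n)}_w| \le \sum_{n\ge k}\Vert\Delta^{(n)}\Vert_1$, and summing the geometric series (finite precisely because $C_s(\deg G+1)<1$) yields the claimed $C_d a^\alpha (C_s(\deg G+1))^{k}/(1-C_s(\deg G+1))$. The disconnected case $d_G(v,w)=\infty$ drops out for free, since no increment ever reaches $w$ and thus every $\Delta^{(n)}_w=0$. For the closing remark I would unroll the recursion into a walk count, bounding $\Delta^{(n)}_w$ by $C_s^n$ times the number of length-$n$ walks from $v$ to $w$ times $C_d a^\alpha$; the dominant shortest-walk contribution to a node at distance $k$ grows like $(C_s\deg G)^k$ (branching factor $\deg G$, dropping the self-loop term that produced the $+1$), so genuine monotone spatial decay holds once $C_s < 1/\deg G$.

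The main obstacle is exactly the feedback coupling flagged just before the statement: because all observables are coupled through the demands and the hidden cause, a perturbation at $v$ does in principle reach every node, so no purely local argument suffices. The crux is the simultaneous bookkeeping of two competing phenomena -- the support of $\Delta^{(n)}$ grows by one hop per step (linearly in $n$) while its magnitude contracts geometrically -- and showing that the contraction beats the combinatorial spread. This is precisely where $C_s(\deg G+1)<1$ enters: it guarantees that the mass arriving at distance $n$ decays faster than the number of nodes reachable at that distance can grow, which is what turns a globally coupled steady state into an exponentially localized effect.
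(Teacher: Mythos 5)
Your argument is correct for the main inequality, but it takes a genuinely different route from the paper. The paper works statically with the two steady states: it sets $\Delta(n)=\sup_{d_G(v,u)=n}|(O(D))_u-(O(D+A))_u|$, uses locality and the componentwise Lipschitz/H\"older bounds to derive the self-consistency inequality $\Delta(n)\le C_s(\deg G+1)\max\{\Delta(n-1),\Delta(n),\Delta(n+1)\}+\1[n=0]\,C_d a^\alpha$, proves that $n\mapsto\Delta(n)$ is monotonically decreasing by a backward induction starting from the farthest node (so that the max can be replaced by $\Delta(n-1)$), and only then unrolls the recursion and separately bounds $\Delta(0)$. Your dynamic telescoping argument --- iterate $O(\cdot,D+A)$ from the unperturbed steady state, observe that the $n$-th increment is supported within graph distance $n$ of $v$ while its $\ell^1$ mass contracts by $C_s(\deg G+1)$ per step, then sum the geometric tail --- sidesteps the monotonicity induction entirely, which is the most delicate step of the paper's proof, and as a by-product bounds the aggregate effect $\sum_{w:\,d_G(v,w)\ge k}|(O(D))_w-(O(D+A))_w|$ rather than only the per-node effect; both proofs rest on the same locality of $O_w$ in $N_G(w)\cup\{w\}$ and the same double-counting that produces the factor $\deg G+1$. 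Two caveats. You obtain the bound with $\le$ where the theorem asserts strict $<$; this is cosmetic. More substantively, your treatment of the closing remark (necessity of $C_s<1/\deg G$ for monotone decrease) is only a heuristic: an upper bound of order $(C_s\deg G)^k$ via walk counts does not show that the effect can actually \emph{increase} with distance when $C_s\ge 1/\deg G$. The paper proves necessity by an explicit construction --- a graph on $\{v,w,u_1,\dots,u_n\}$ with edges $\{v,w\}\times\{u_1,\dots,u_n\}$ and transition maps for which the effect at distance $2$ equals exactly $nC_s$ times the effect at distance $1$ --- and to establish that part of the statement you would need to exhibit such an example rather than estimate.
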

\begin{proof}
    Due to space restrictions, the proof can be found in the appendix.
\end{proof}
\begin{toappendix}
\begin{theorem}
    Assume that the transition function $P$ is Lipschitz continuous in the first and $\alpha$-Hölder continuous second argument, i.e. $\Vert \observable(p,d) - \observable(p',d') \Vert_1 \leq C_s \Vert p-p' \Vert_1 + C_d \Vert d-d' \Vert_1^\alpha$. Let $D \in \R^{V(G)}$ be a constant/mean demand pattern and $L \in \R^{V(G)}, L_w = \delta_{vw} l, \; l > 0$ be a leak at node $v$. 
    If $C_s < 1/(\deg G+1)$, where $\deg G = \max_{u \in V(G)} |N_G(u)|$ is the maximal node degree in $G$, then the effect of the leakage decays exponentially throughout $G$:
    \begin{align*}
        &| (P(D))_w - (P(D+L))_w | \\
        &\begin{cases}
        = 0, & d_G(w,v) = \infty \\
        < 
        \dfrac{C_d l^\alpha \cdot (C_s (\deg G+1))^{d_G(v,w)}}{1-C_s (\deg G+1)}, & \text{otherwise}
        \end{cases}
    \end{align*}
    where $d_G(u,v)$ denotes the distance in $G$, i.e. the length of the shortest path in $G$ connecting $u$ and $v$ or $= \infty$ if there is no such path.

    The assumption $C_s < 1/\deg G$ is necessary to assure a monotonic decrease of the effect.
\end{theorem}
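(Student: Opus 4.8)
The plan is to reduce the comparison of the two steady states to a single linear recursion on the per-node discrepancies and then solve that recursion by a Neumann-series (walk-counting) argument. Write $P(D)$ and $P(D+L)$ for the two steady states, which exist and are unique by Lemma~\ref{lem:bannach}, and set $\Delta_w := |(P(D))_w - (P(D+L))_w| \ge 0$; since $V(G)$ is finite, $\Delta = (\Delta_w)_w$ is a finite nonnegative vector. Both steady states are fixed points of their one-step maps, i.e. $P(D)=\observable(P(D),D)$ and $P(D+L)=\observable(P(D+L),D+L)$. The first, and in my view most important, step is a \emph{locality lemma}: because $\observable_w$ reads only the observables on $N_G(w)\cup\{w\}$ and the single demand $D_w$, the global $\Vert\cdot\Vert_1$-Lipschitz/Hölder hypothesis can be localized to
\[
 |\observable_w(o,d)-\observable_w(o',d')| \le C_s \!\!\sum_{u \in N_G(w)\cup\{w\}}\!\! |o_u-o'_u| + C_d\,|d_w-d'_w|^\alpha .
\]
I would prove this by a ``freeze the irrelevant coordinates'' trick: replace $o'$ (resp. $d'$) by a vector $\hat o$ (resp. $\hat d$) agreeing with $o'$ on $N_G(w)\cup\{w\}$ (resp. with $d'$ at $w$) and with $o$ (resp. $d$) elsewhere; then $\observable_w$ is unchanged, a single component is bounded by the full $\Vert\cdot\Vert_1$-norm of the image difference, and the stated hypothesis applies.

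Next I would assemble these localized bounds into a componentwise fixed-point inequality. Since $L$ is supported on $v$ only, the demand term contributes solely at node $v$, giving
\[
 \Delta_w \le C_s \!\!\sum_{u \in N_G(w)\cup\{w\}}\!\! \Delta_u + C_d\, l^\alpha\, \delta_{vw}
 \qquad(\forall w \in V(G)).
\]
In matrix form this is $\Delta \le B\Delta + b$, where $\mathcal{A}$ is the $0/1$ matrix with $\mathcal{A}_{wu}=1$ iff $u\in N_G(w)\cup\{w\}$, $B=C_s\mathcal{A}$, and $b=C_d l^\alpha e_v$ with $e_v$ the $v$-th standard basis vector. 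All entries of $B$ are nonnegative and its maximal row sum is at most $C_s(\deg G+1)<1$, so $\Vert B\Vert_\infty<1$. Iterating yields $\Delta \le B^n\Delta + \sum_{k=0}^{n-1}B^k b$, and since $\Delta$ is finite and $B^n\to 0$, I obtain the Neumann bound $\Delta \le \sum_{k\ge 0} B^k b$.

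It then remains to read off the decay from the walk structure of $\mathcal{A}$. Componentwise, $\Delta_w \le C_d l^\alpha \sum_{k\ge 0} C_s^k (\mathcal{A}^k)_{wv}$, and $(\mathcal{A}^k)_{wv}$ is exactly the number of length-$k$ walks from $w$ to $v$ in $G$ with self-loops added. Deleting the self-loops from such a walk leaves a genuine $w$--$v$ path of length $\le k$, so $(\mathcal{A}^k)_{wv}=0$ whenever $k<d_G(v,w)$; in particular, if $d_G(v,w)=\infty$ every term vanishes and $\Delta_w=0$, which is the first case. Otherwise, bounding the number of length-$k$ walks leaving $w$ by $(\deg G+1)^k$ and summing the geometric tail from $k=d_G(v,w)$ gives
\[
 \Delta_w \le C_d l^\alpha \!\!\sum_{k\ge d_G(v,w)}\!\!\big(C_s(\deg G+1)\big)^{k}
 = \frac{C_d l^\alpha\,\big(C_s(\deg G+1)\big)^{d_G(v,w)}}{1-C_s(\deg G+1)} .
\]
Strictness follows because the crude count $(\deg G+1)^k$ strictly overcounts the walks reaching $v$ (for instance, length-$d_G(v,w)$ walks use no self-loop), so at least the leading term is strictly smaller.

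Finally, for the monotonicity remark I would exhibit a witness showing $C_s<1/\deg G$ is sharp: on a layered/regular graph the number of \emph{shortest} $v$--$w$ walks can grow by a factor approaching $\deg G$ with each extra step (self-loops preserve distance, so only the $\deg G$ genuine moves advance it), whence the leading contribution $C_s^{\,d}\cdot\#\{\text{shortest paths}\}$ behaves like $(C_s\deg G)^{d}$; for $C_s\ge 1/\deg G$ this no longer decreases, so monotonicity of the true effect (as opposed to merely of the bound above) can be violated. The one genuinely delicate point is the locality lemma, since it is what converts the global $\Vert\cdot\Vert_1$ hypothesis into a per-neighbor constant $C_s$ and thereby produces the $(\deg G+1)$-factor driving the entire spatial decay; once it is established, the remaining Neumann-series and walk-counting estimates are routine.
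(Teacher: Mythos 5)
Your proof is correct, but it takes a genuinely different route from the paper's. The paper works with the layer-wise supremum $\Delta(n) = \sup_{d_G(v,u)=n}\Delta_u$ and first proves, by a downward induction from the outermost layer, that $\Delta(n)$ is monotonically decreasing (using that when $C_s(\deg G+1)<1$ the maximum over $\{\Delta(n-1),\Delta(n),\Delta(n+1)\}$ cannot be attained at $\Delta(n)$ itself); it then deduces $\Delta(n)\le (C_s(\deg G+1))^n\Delta(0)$ and closes by bounding $\Delta(0)$. You instead derive the componentwise fixed-point inequality $\Delta\le B\Delta+b$ with $B=C_s\mathcal{A}$ and solve it by a Neumann series, reading off the decay from the walk counts $(\mathcal{A}^k)_{wv}$. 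Your route has some advantages: the disconnected case falls out for free ($(\mathcal{A}^k)_{wv}=0$ for all $k$), the estimate is in principle sharper per node (the true walk counts can be much smaller than $(\deg G+1)^k$), and it avoids the somewhat delicate layer-monotonicity induction; it also makes explicit the ``freeze the irrelevant coordinates'' localization step that the paper uses silently when it writes $\Delta_u\le C_s(\Delta_u+\sum_{u'\in N(u)}\Delta_{u'})+C_d\delta_{uv}l^\alpha$. The paper's route, conversely, yields the additional structural statement that the worst-case effect is monotone in the distance layers. Two minor points: your strictness argument (``the leading term uses no self-loop'') covers $d_G(v,w)\ge 1$ but for $w=v$ you need to point at the $k=1$ term instead; and for the necessity claim the paper constructs a fully explicit witness (a complete-bipartite-type graph with linear transition maps achieving ratio $nC_s$ between consecutive layers), whereas you only sketch the counting heuristic --- you should spell out a concrete transition function satisfying the hypotheses to make that part complete.
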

\begin{proof}
If $w$ and $v$ are not connected in $G$, then $d_G(w,v) = \infty$ and by the definition of the transition function, $L$ does not affect $P(D+L)$ so that the difference is $0$ and the inequality is fulfilled. 

Thus, assume $G$ is connected.
    Denote by $\Delta_u = |(P(D))_u - (P(D+L))_u|$, by $\Delta(n) = \sup_{d_G(v,u) = n} \Delta_u$, by $N(u)$ the set of all neighbours of $u$ in $G$. By convention we set $\sup \emptyset = -\infty$.
    Then we have
    \begin{align*}
        \Delta(n) &= \sup_{d_G(v,u) = n} \Delta_u
        \leq \sup_{d_G(v,u) = n} C_s \left(\Delta_u + \sum_{u' \in N(u)} \Delta_{u'} \right) + C_d \cdot \delta_{uv} l^\alpha
        \\&\leq C_s (\deg G +1) \max\{\Delta(n-1),\Delta(n),\Delta(n+1)\} + \1[n = 0] C_d l^\alpha,
    \end{align*}
    where the last inequality holds since every node is affected by its at most $\deg G$ neighbors and itself and each of those is at most 1 node closer or further away from $v$ than $u$.

    Now, show that $\Delta(n)$ is decreasing: For $n \geq 1$ we have 
    \begin{align*}
        \Delta(n)&\leq C_s(\deg G+1) \max\{ \Delta(n-1),\Delta(n),\Delta(n+1) \}
    \end{align*}
    As $C_s (\deg G +1) < 1$ the maximum on the right-hand side is bigger than $\Delta(n)$, in particular, it has to be given by either $\Delta(n+1)$ or $\Delta(n-1)$. If we apply this argument inductively, starting at $N = \sup_{u \in V(G)} d_G(u,v)$ so that $\Delta(N+1) = -\infty$ we see that
    \begin{align*}
        \Delta(N) \leq \Delta(N-1) \leq \dots \leq \Delta(2) \leq \Delta(1) \leq \Delta(0),
    \end{align*}
    as all $\Delta(n) \geq 0$ and 
    where the last term is obtained by applying the argument to $n = 1$.
    Thus, by induction, we have
    \begin{align*}
        \Delta(n) \leq (C_s (\deg G+1))^n \Delta(0).  
    \end{align*}
    In particular, equality for $\Delta(1)$ holds if and only if $\Delta(0) = 0$.

    For $n = 0$ we have that each neighbour hast distance $1$ from $v$ thus we have
    \begin{align*}
        \Delta(0) &\leq C_s \Delta(0) + C_s \deg G \Delta(1) + C_d l^\alpha 
        \\&< C_s (\deg G+1) \Delta(0) + C_d l^\alpha 
        \\\Rightarrow \Delta(n) &< \frac{C_d l^\alpha (C_s (\deg G+1))^n}{1-C_s (\deg G+1)}.
    \end{align*}
    Here the strict inequality follows by the fact $\Delta(0) \geq C_dl^\alpha > 0$.

\textbf{Necessity: } Consider $G = (\{v,w,u_1,\dots,u_n\},\{v,w\} \times \{u_1,\dots, u_n\})$, $\observable_v(p,d) = d$, $\observable_{u_i}(p,d) = C_s p_v$, $\observable_w(p,d) = C_s \sum_i p_{u_i}$. Then $|\observable_v(p,d)-\observable_v(p',d')| \leq |d-d'|$, $|\observable_{u_i}(p,d) - \observable_{u_i}(p',d')| = C_s |p_v-p_v'|$, $|\observable_w(p,d)  - \observable_w(p',d')| = C_s \left|\sum_{i = 1}^n p_{u_i}-p_{u_i}' \right| \leq C_s \sum_{i = 1}^n \left|p_{u_i}-p_{u_i}' \right|$ so the transition function fulfills the requirements. However, 
\begin{align*}
    (\observable(D))_{u_i} &= C_s (\observable(D))_{v} = C_s D_v \\
    (\observable(D))_w &= C_s \sum_{i = 1}^n (\observable(D))_{u_i} = n C_s^2 D_v 
\end{align*}
and thus
\begin{align*}
    \frac{|(\observable(D))_{w} - (\observable(D+L))_{w}|}{|(\observable(D))_{u_i} - (\observable(D+L))_{u_i}|} &= \frac{n C_s^2 l}{C_s l} = n C_s
\end{align*}
as $\deg G = n$ the statement follows.
\end{proof}
\end{toappendix}
Considering the explanation scheme used to localize anomalies of type II aims at finding the most important measurement nodes to discriminate between data collected before and after the anomaly occurred, we see that this strategy translates to anomalies of type I. As their effect decays throughout the network, nodes close to the anomaly will be more important for inference.


\subsubsection{Baseline Methodology for Anomaly Localization\label{sec:baseline}}
Next to justifying model-based explanations, our considerations yield a simpler methodology: 
Combining our findings from modeling anomalies in critical infrasturcture (Figure~\ref{fig:used_net}) and Theorem~\ref{thm:decay}, we obtain that leakages can be modeled as a drift in the system which inflicts itself locally much stronger than globally as the effect of the leakage on the pressure measurements decays exponentially. This observation gives rise to a straightforward window-based leakage localization technique:
\begin{align}
    v_{\text{leakage}}(t; w) := \underset{v \in V(G)}{\argmax} \left| \sum_{i = t-w}^{t} \observable_{i,v} - \sum_{i = t}^{t+w} \observable_{i,v} \right|\label{eq:mean}
\end{align}
where $t$ is the onset time of the leakage, which can be obtained by drift detection, and $w$ is the window size, a hyper-parameter of this method.

While this simple methodology is justified by the theory we expect some shortcomings when applying it to realistic data. For example, just considering the discrepancy in the mean does not account for differences in the variance across different sensor nodes.

In the remainder of the paper, we will experimentally evaluate the suitability of model-based drift explanations. We will consider different instantiations of feature-based explanation schemes.

\section{Experiments\label{sec:exp}}
Before presenting and discussing the results we will briefly introduce the experimental setup and evaluation\footnote{The code of the experiments will be made available after acceptance.}.

\begin{table*}[ht]
\caption{Results of anomaly detection in WDNs. Type I: different metrics using topological distances. The table shows the median (m), mean ($\mu$), and standard deviation ($\sigma$) over all 764 nodes and 42 runs. Bigger better ($+$), smaller better ($-$). Type II: recall, precision, and F1-score for the localization of sensor faults over all experiments.}
\footnotesize
\centering
\results{src/results_water_with_acc.csv}
\label{tab:results}
\end{table*}

\subsection{Setup, Data, and Metrics} 
\subsubsection{Case Studies}

To evaluate the proposed methodology, we consider two exemplary instantiations of critical infrastructure. For both we rely on state-of-the-art simulation tools as this way we can simulate different types of anomalies throughout the network.

One popular network used in the domain of WDNs is the L-Town network which resembles parts of the old town of Limassol, Cyprus. It is a comparably complex and realistic water supply network. As realistic demands for a period of one year are available, one can simulate different leakage and sensor fault scenarios using the atmn package. As visualized in Figure~\ref{fig:ltwon}, Area A contains 661 nodes and 764 edges with 29 optimally placed pressure sensors~\cite{vrachimis_battle_2022}. 

To obtain expressive results we simulate several scenarios for the experimental evaluation. We consider leakage sizes of 7mm, 11mm, 15mm, and 19mm which would be categorized as small background to medium-sized leakages in the L-Town network. Especially, the smaller leakages pose challenges to many approaches in the literature~\cite{vrachimis_battle_2022}. To create the scenarios, 23 windows of one month length at a 15-minute sampling interval are chosen with 15 days offset between the windows. For each window 10 random leak onset times are generated with the leak persisting to the end of the window. This leads to 230 scenarios per leak size and pipe despite the limited amount of available realistic demands. Besides, we consider a range of realistic sensor faults modeled in \cite{reppa_sensor_2016} applied in the same manner to evaluate the localization of faults of type II.

As a second case study, we consider an EG based on the SimBench urban LV grid \cite{meinecke2020simbench}. It is a three-phase electrical distribution grid with a nominal voltage of 400V consisting of 58 nodes arranged in 7 strings of varying lengths. As measurement points, we consider a set of 5 nodes being placed in highly connected areas. Again, rely on simulations and use the Matlab simulink simscape library.
In this setup, we consider sensor failures only. We again rely on the modeling by \cite{reppa_sensor_2016}.

\subsubsection{Used Methods}
As visualized in Figure \ref{fig:driftexp}, in our experiments, we assume that a drift detector already determined the time of the drift. As baselines, we consider a random baseline (random), choosing a pipe at random, and the window-based strategy derived in section \ref{sec:baseline}. In addition to the statistical test-based leakage localization strategy proposed in \cite{vaquet2024investigating} (KS), we use the following instantiations of model-based drift explanations:
We combine tree-based models (RandomForests (RF) and ExtraTrees (ET)) with feature importance (FI) and Permutation Feature Importance (PFI). Besides, we consider Logistic Regression with l2 regularization (LogReg), parameters are automatically determined by cross-validation, and linear SVM (SVM)). Here we consider the absolute value of the weight vector as the importance score.

\subsubsection{Metrics}
\newcommand{\numCloser}{\#closer}
\newcommand{\relDist}{rel. dist.}
\newcommand{\bestThree}{best-3}
For localizing anomalies of type I, we consider the metrics defined in \cite{vaquet2024investigating}. 
In the following let $s^* \in S$ denote the selected node and $v \in V(G)$ the leaky node. 
We use the following four metrics: 
\begin{itemize}
    \item distance between selected and actual node (distance; $d(s^*,v)$),
    \item number of nodes in $S$ closer to the actual node (\numCloser; $|\{s \in S \mid d(s,v) < d(s^*,v)\}|$),
    \item relative distance between actual node, selected and optimal node (\relDist; $d(s^*,v) / \min_{s \in S} d(s, v)$, we exclude the cases where $v \in S$ to avoid division by 0) 
    \item and the intersection size of the 3 most important and closest sensors (\bestThree) which is relevant for interpolation tasks.
\end{itemize}

For $d$ we consider the following distances: the topological distance (topo.) in the graph $G$, i.e. the number of nodes of the shortest connecting path, and the actual geographic distance (geo.). We also documented model accuracy. 

As considering distances is not a sensible choice for evaluating the localization of sensor faults (type II), in this case, we consider the recall, precision, and F1 score of the overall localization capabilities of all experiments, e.g. the localization only counts as successful if the malfunctioning sensor has been identified.

\subsection{Results}

The results of our experiments with WDNs are summarized in Table~\ref{tab:results}.

\paragraph{WDNs -- Type I}
When analyzing the obtained results for localizing leakages, we found that the values when computed topological and geographical distance differ neither quantitatively, i.e. in the absolute number obtained, nor qualitatively, i.e. concerning the ranking of the methods. The only exception is the distance metric, which does differ quantitatively only, as to be expected.
Therefore, we will not distinguish the cases for the distance measures and only present the results for the topological distance.

As can be seen, there is a strong correlation between distance, \numCloser, and \relDist, which we could confirm by a range of statistical tests. 
We find that the simple mean baseline performs significantly better than the random baseline as suggested by Theorem~\ref{thm:decay}. We make similar observations for the KS test-based method proposed by \cite{vaquet2024investigating}. However, many of the explanation-based leakage localization schemes are capable of pinpointing the leakage even closer. An explanation for this is that the linear models compensate for variance and covariance terms. To see this, consider the two mean values of the mean approach as prototypes that induce a linear model, which typically suffers from this problem. 

While in the median all tree-based methods find the closest sensor node, the experiments with logistic regression tend to find sensors that are slightly further away from the leakage. 
This might be a reflection of the fact that linear models are unable to learn the more complex, non-linear relationships between features which explains that the tree-based models outperform them if information about the drift is provided. 


Considering the best-3-metric, we find that even the best performing tree-based implementations are only identifying less than 2 of the closest sensor nodes when considering the top 3 feature importances. This needs further investigation in future work.

In addition to the global investigation, figuring out whether there are areas that pose particular challenges to the methodology is interesting and might generate insights. Therefore we additionally created a visualization of the results for some selected methods 
(see Figure~\ref{fig:maps}). The maps show the distance between the leakage and the selected sensor node. The values are pipe-wise normalized taking on values between the theoretically closest and the mean for the Random baseline. As can be seen, the error distribution is not uniform but rather depends on the location in the network. Though it is influenced by the model (and the metric as further analysis shows) certainty tendencies and error hot spots can be observed across all setups. On closer consideration, we can attribute at least some of those effects to the properties of the network. For example, the water is supplied to the network in the upper right of the map causing small leakages to remain hidden in the strong inflow. However, we were yet not able to explain all areas with low localization scores.
Thus, further research on our agnostic approach is necessary.


\begin{figure}[t!]
    \centering
    \begin{minipage}{0.3\textwidth}
        \includegraphics[width=\textwidth]{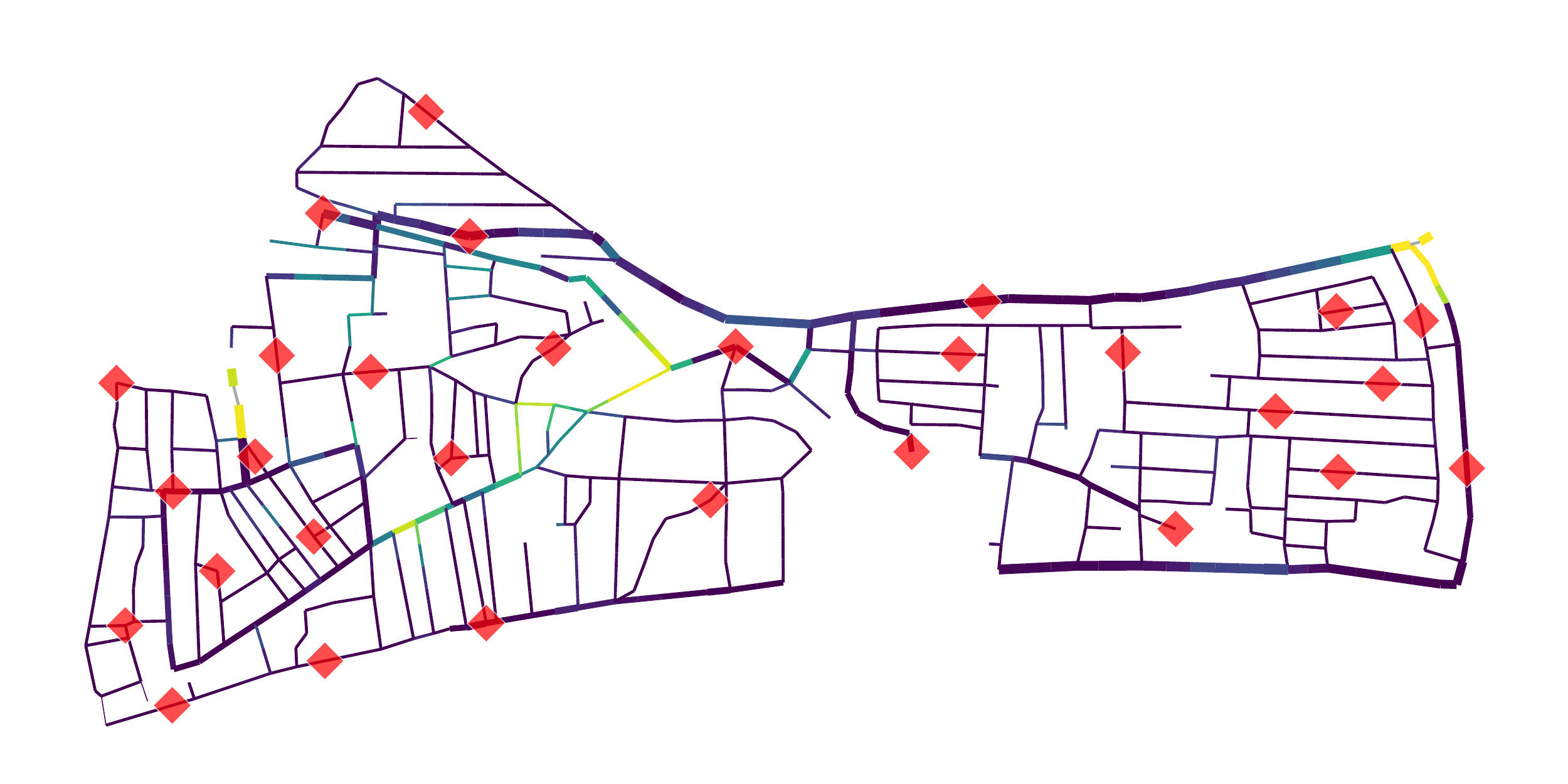}
        \subcaption{FI (ET)}
    \end{minipage}
    \begin{minipage}{0.3\textwidth}
        \includegraphics[width=\textwidth]{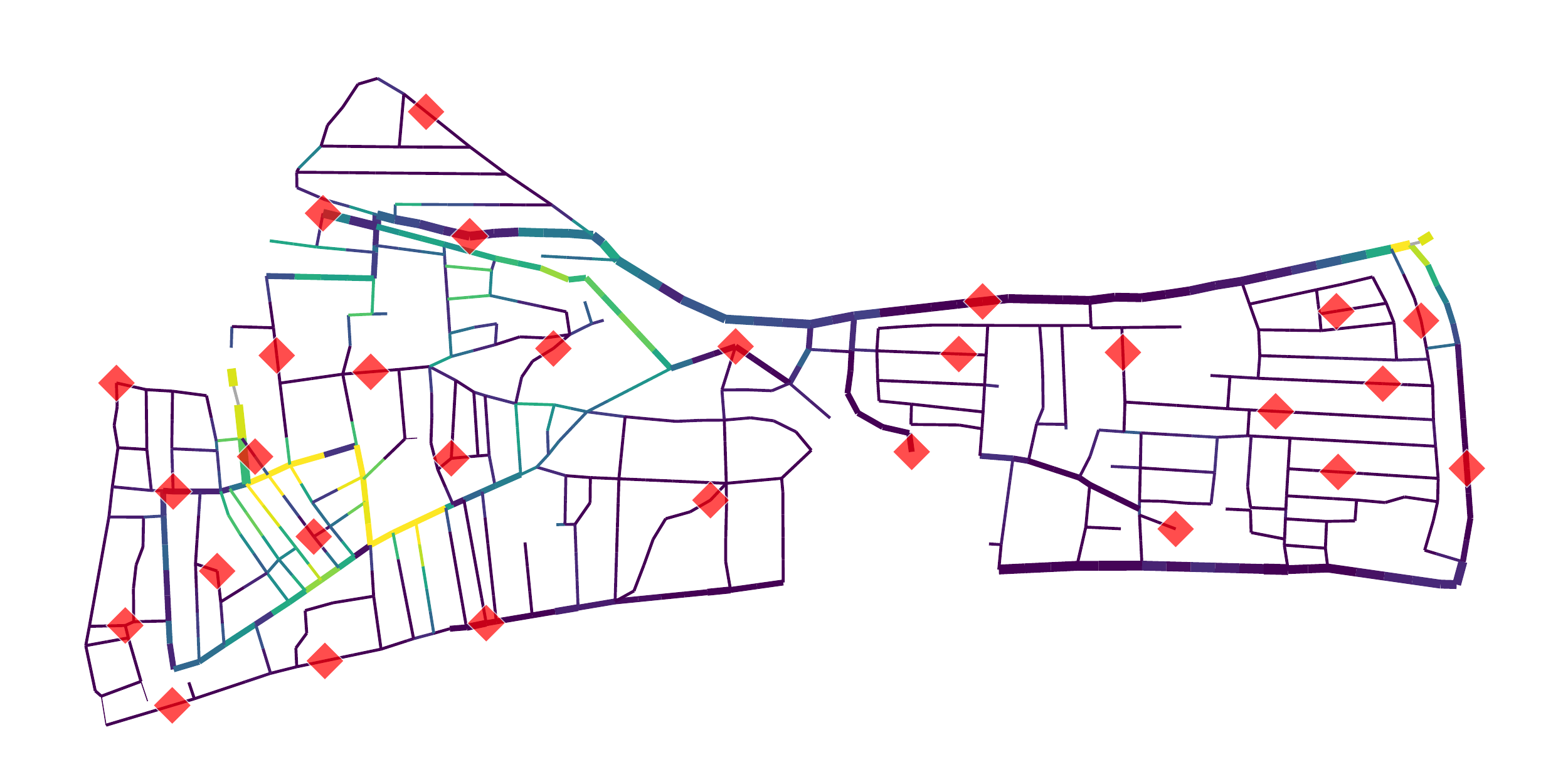}
        \subcaption{SVM}
    \end{minipage}
    \begin{minipage}{0.3\textwidth}
        \includegraphics[width=\textwidth]{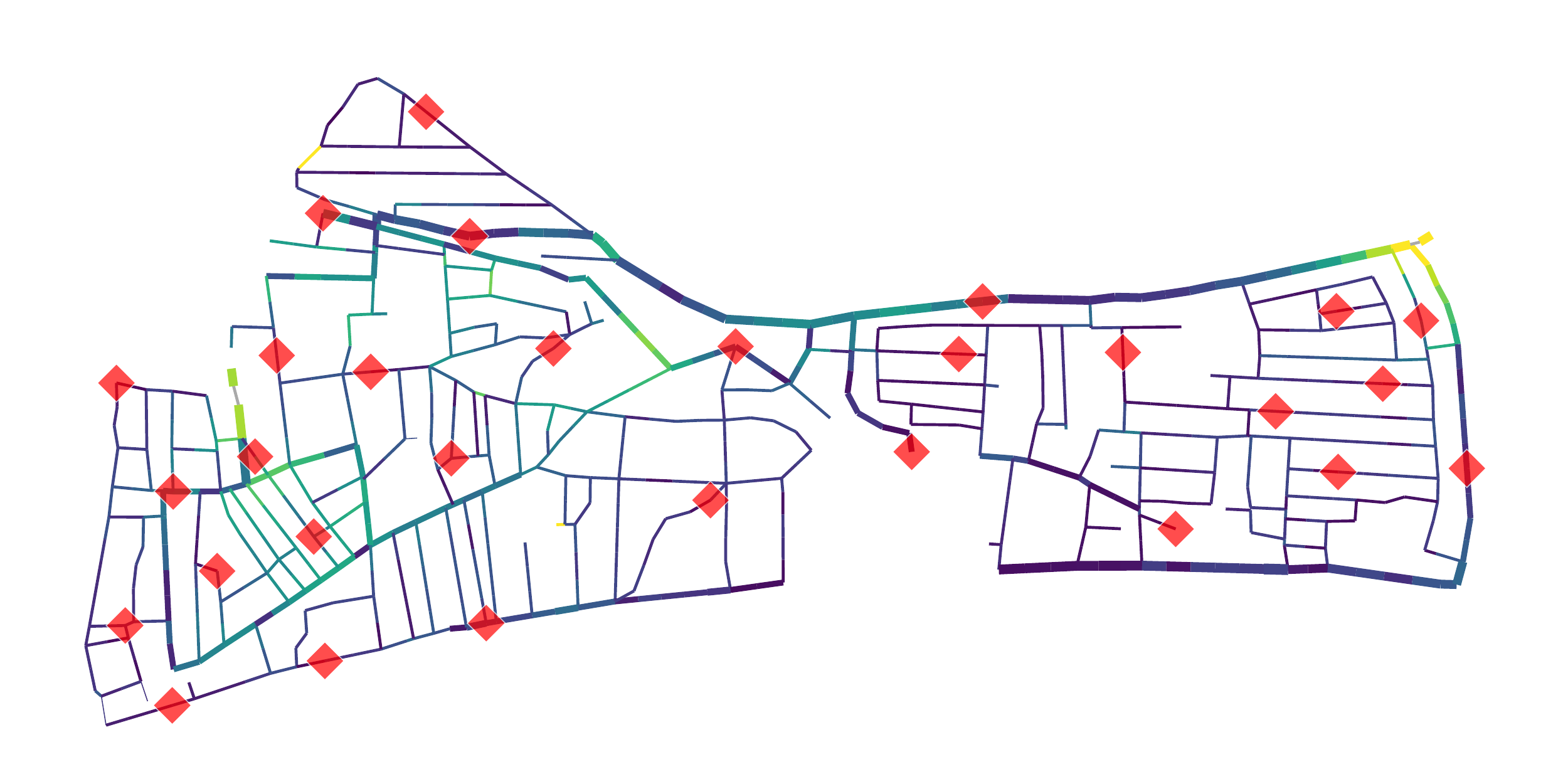}
        \subcaption{Mean}
    \end{minipage}
    \begin{minipage}{0.05\textwidth}
    \includegraphics[height=10em]{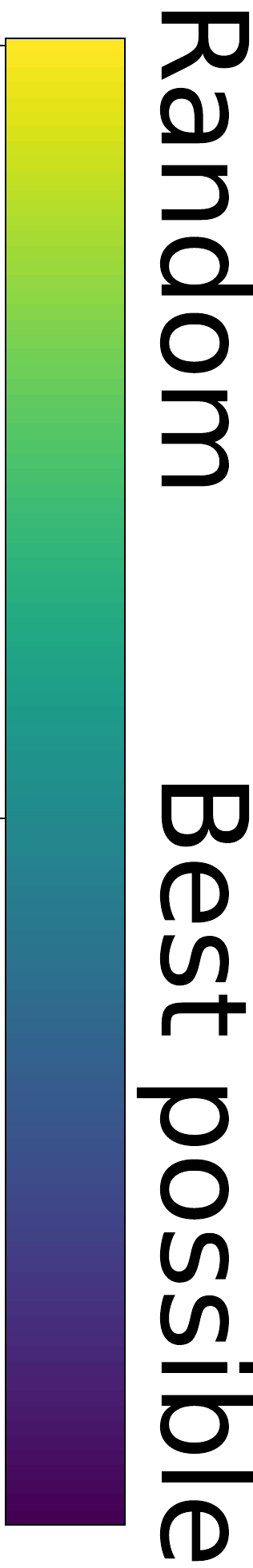}
    \end{minipage}
    \caption{Visualisation of (relative) distance between pipe and selected node. Value is normalized between best possible value (closest sensor; purple) and Random baseline (yellow). Red diamonds mark sensor positions.}
    \label{fig:maps}
\end{figure}

\paragraph{WDNs -- Type II}
Considering our experiments on localizing sensor faults, we observe good localization capabilities for many of the considered approaches. Again, the mean and KS baseline perform significantly better than the random baseline. Employing the tree-based explanation scheme we observe near to perfect localization results. However, we observe that the versions implementing linear models are not capable of localizing anomalies reliably at all.

\paragraph{EGs -- Type II}
\begin{table}
\caption{Results of sensor fault localization in EGs.}
\footnotesize
\centering
\resultsElecAcc{src/results_elec.csv}
\label{tab:results}
\end{table}
Our results on sensor fault localization in EGs are visualized in Table \ref{tab:results}. While we overall obtain much worse localization results, we observe similar overall behavior: while the linear models do not succeed at localizing the faults, the tree-based models show that the localization works significantly better than the random baseline. We did not expect very good results as the considered EG represents a much finer spatial resolution, i.e. each node represents one household while in the considered WDN, each node represents multiple ones resulting in much smoother signals. Thus, for this benchmark, we obtained much noisier signals which increase the difficulty for models to find the drifting features.

\section{Discussion\label{sec:discussion}}
In this contribution, we focused on anomaly localization in critical infrastructure, a task of particular societal relevance, as for example, drinking water is an increasingly limited resource due to climate change.
To the best of our knowledge, this work is the first work analyzing the dynamics of critical infrastructure utilizing Bayesian networks. We obtained that concept drift is a valid way to model anomalies in critical infrastructure and that their influence decays exponentially fast. Based on these insights we obtained an approach relying on model-based drift explanations which showed promising first results in our experimental evaluation which was mainly conducted in water distribution networks. In contrast to established work leakage localization, our methodology has two main advantages. First, it is independent of the network's topology and does not require any additional data except for pressure measurements. Thus, it is applicable in many real-world scenarios where data is limited and can generalize over different network topologies. Second, it is computationally lightweight in comparison to hydraulic-based approaches which rely on fitting critical parameters and running many computationally expensive simulations at inference time.
The proposed modeling and the obtained strategy to localize leakages by model-based drift explanations generalizes to other types of anomalies and other critical infrastructure instances that can be modeled as graphs as showcased by our experiments on sensor faults in EGs.

\bibliographystyle{IEEEtran}
\bibliography{bib_water}

\end{document}